\newcommand{\dosemic}{\renewcommand{\@endalgocfline}{\algocf@endline}}
\let\oldnl\nl
\newcommand{\nonl}{\renewcommand{\nl}{\let\nl\oldnl}}
\newcommand{\bmx}{\mathbf{x}}
\newcommand{\bmz}{\mathbf{z}}
\newtheorem{theorem}{Theorem} 
\newtheorem*{theorem*}{Theorem}
\newtheorem{corollary}{Corollary}
\newtheorem{corollary*}{Corollary}
\title{LoRID: Low-Rank Iterative Diffusion for Adversarial Purification}
\author{
Geigh Zollicoffer\textsuperscript{\rm 1}\equalcontrib,
    Minh Vu\textsuperscript{\rm 1}\equalcontrib,
    Ben Nebgen\textsuperscript{\rm 1},
    Juan Castorena\textsuperscript{\rm 2},    
    Boian Alexandrov\textsuperscript{\rm 1},
    Manish Bhattarai\textsuperscript{\rm 1}
    }
\begin{document}

\maketitle

\begin{abstract}
This work presents an information-theoretic examination of diffusion-based purification methods, the state-of-the-art adversarial defenses that utilize diffusion models to remove malicious perturbations in adversarial examples. By theoretically characterizing the inherent purification errors associated with the Markov-based diffusion purifications, we introduce LoRID, a novel Low-Rank Iterative Diffusion purification method designed to remove adversarial perturbation with low intrinsic purification errors. LoRID centers around a multi-stage purification process that leverages multiple rounds of diffusion-denoising loops at the early time-steps of the diffusion models, and the integration of Tucker decomposition, an extension of matrix factorization, to remove adversarial noise at high-noise regimes. Consequently, LoRID increases the effective diffusion time-steps and overcomes strong adversarial attacks, achieving superior robustness performance in CIFAR-10/100, CelebA-HQ, and ImageNet datasets under both white-box and black-box settings. 
\end{abstract}

%

\section{Introduction}
\label{sec:intro}

Despite their widespread adoption, neural networks are vulnerable to small malicious input perturbations, leading to unpredictable outputs, known as \textit{adversarial attacks}~\cite{szegedy2014intriguing, goodfellow2015explaining}. Various defense methods have been developed to protect these models~\cite{app9050909}, including \textit{adversarial training}~\cite{madry2019deep, bai2021recent, zhang2019theoretically} and \textit{adversarial purification}~\cite{salakhutdinov2015learning,shi2021online,song2018pixeldefend, nie2022DiffPure, wang2022guided, wang2023better}.
With the introduction of \textit{diffusion models}~\cite{ho2020denoising, song2021scorebased} as a powerful class of generative models, diffusion-based adversarial purifications have overcome training-based methods and achieve state-of-the-art (SOTA) robustness performance~\cite{blau2022threat, wang2022guided,nie2022DiffPure,xiao2022densepure}. In principle, the diffusion-based purification first diffuses the adversarial inputs with Gaussian noises in $t$ time-steps and utilizes the diffusion's denoiser to remove the adversarial perturbations along with the added Gaussian noises. 
While it is computationally challenging to attack diffusion-based purification due to vanishing/exploding gradient problems, high memory costs, and substantial randomness~\cite{kang2024diffattack}, recent work has been proposing efficient attacks against diffusion-based purification~\cite{nie2022DiffPure,kang2024diffattack}, which can degrade the model robustness significantly. A naive way to prevent such attacks is to increase the diffusion time-step $t$ as it will remarkably raise both the time and memory complexity for the attackers~\cite{kang2024diffattack}. However, increasing $t$ would not only introduces additional computational cost of purification~\cite{nie2022DiffPure, lee2023robust}, but also inevitably damages the purified samples (see Theorem~\ref{theorem:ddpm_time} or Fig.~\ref{fig:illu_purified}), and significantly degrade the classification accuracy.

\begin{table}[t]
\caption{Performance of SOTA score-based purification versus our proposed LoRID, a Markov-based purification, in {\color{black} \underline{CIFAR-10}} ($\epsilon = 8/255$) and {\color{black} ImageNet} ($\epsilon = 4/255$) under $L_\infty$ white-box AutoAttack in WideResNet-28-10.}
\label{table:score_vs_markov}
\vspace{-2mm}
\centering
\resizebox{0.99\columnwidth}{!}{%
\begin{tabular}{@{}ccc@{}}
\toprule 
\textbf{Purification} & \textbf{Score-based}                                  & \textbf{LoRID} \\ \midrule
Standard Acc     & \underline{ 89.02} / {\color{black} 71.16} &  \underline{ 84.20} / {\color{black} 73.98}      \\
Robust Acc  &    \underline{46.88} / {\color{black} 44.39}                                                   &         \underline{ 54.14} / {\color{black} 56.54}                     \\
Inference Run-time Speedup         &     $\underline{\times 1}$/  $\times 1$           &              $\underline{\times 2.3}$ / $\times 4.6$                  \\ \bottomrule
\end{tabular}
}
\vspace{-5mm}
\end{table}
\begin{figure}[h]
		\centering
\includegraphics[width=0.9\linewidth]{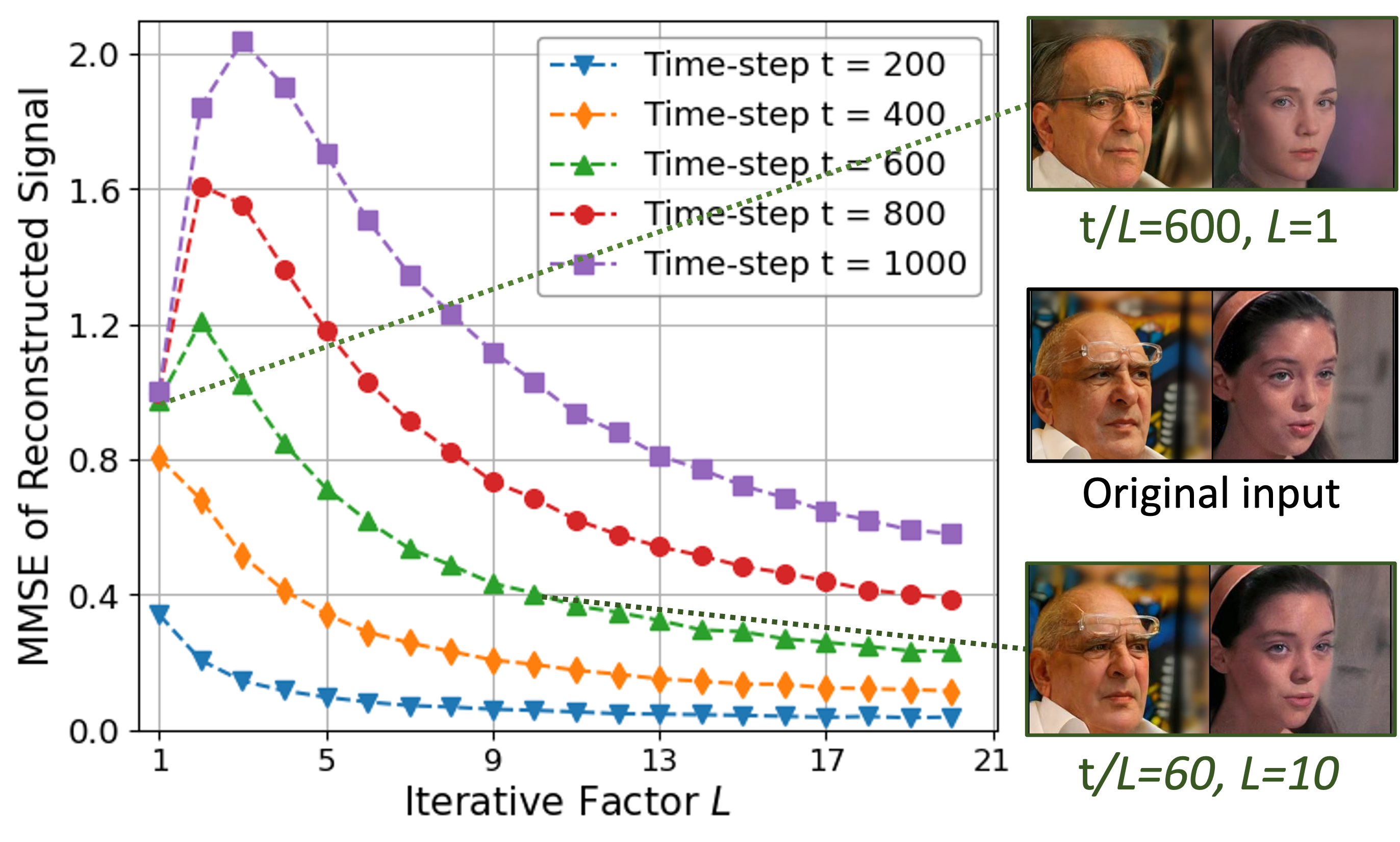}
		\caption{The MMSEs induced by Markov-based purification against the iterative factor $L$ (Corollary~\ref{corollary:clean}): each point is the MMSE of the reconstructed data from a normalized Gaussian through $L$ iterative loops of $t/L$ diffusion-denoising calls. Thus, points on a line share the same effective denoising step $t = (t/L) \times L$. The key observation is the purification error generally decreases as $L$ increases. The right samples compare clean samples, purified samples with a single large time-step $t/L=600$, and those with the same effective denoising step $t$ but with a larger iterative factor $L =10$ (Details in Appx.~\ref{appx:mmse}).}
		\label{fig:msse_vs_max_iter}
  
\end{figure}

\begin{figure*}[ht]
		\centering
			\includegraphics[width=0.8\textwidth]{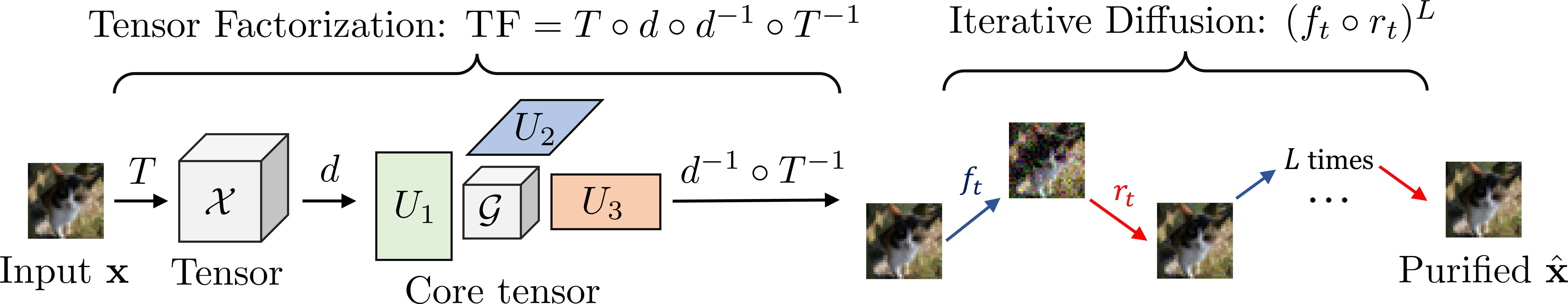}
		\caption{The overall purification process of LoRID: given an input image $\bmx$, LoRID first transforms the image to a tensor and conducts tensor factorization to eliminate some adversarial perturbation. Then, multiple loops of diffusion-denoising, denoted by $f_t$ and $r_t$, at the early stages of the diffusion models are applied to obtain the final purified image $\hat{\bmx}$.}
		\label{lorid_structure}
	\end{figure*}

Our work aims to develop a more robust and efficient diffusion-based purification method to counter emerging adversarial attacks. We first introduce an information-theoretic viewpoint on the diffusion-based purification process, in which the purified signal is considered as the recovered signal from a noisy communication channel. Different from the previous purification~\cite{nie2022DiffPure} centered on the Score-based diffusion~\cite{song2021scorebased}, our work is the first theoretical analysis of the inherent error induced by Markov-based purifications~\cite{blau2022threat, wang2022guided, xiao2022densepure}, which are purifications relying on the Denoising Diffusion Probabilistic Model (DDPM)~\cite{ho2020denoising}. Our theoretical foundation for DDPM (\textbf{Theorem~\ref{theorem:ddpm_converge},~\ref{theorem:ddpm_time} and~\ref{theorem:ddpm_time_upper}}) are essential as they validate the usage of DDPM for purification and leverage its substantial advantage in terms of running time compared to the Score-based (as shown in Table~\ref{table:score_vs_markov}). Our analysis further points out an interesting finding: the purification error (Corollary~\ref{corollary:clean}) can be reduced significantly by conducting multiple iterations at the early time-steps of the DDPM (\textbf{Theorem~\ref{theorem:ddpm_time_loop}}). Particularly, the application of a single purification with a time-step $t$ is theoretically shown to be less beneficial than the looping of $L$ iterations of diffusion-denoising with a time-step of $t/L$ (Fig.~\ref{fig:msse_vs_max_iter}). Our study additionally suggests the usage of \textit{Tucker decomposition}~\cite{5447070}, a higher-order extension of matrix factorization, to attenuate adversarial noise at the high-noise regime (\textbf{Theorem~\ref{theorem:ddpm_tf}}). We realize the advantages of those findings and propose LoRID, a Low-Rank Iterative Diffusion purification method designed to mitigate the purification errors (Fig.~\ref{lorid_structure}). By controlling the purification error, LoRID can effectively increase the diffusion time-step and beat the SOTA robustness benchmark, in both white-box and black-box settings (Table.~\ref{table:score_vs_markov} highlights LoRID's performance in CIFAR-10~\cite{rabanser2017introduction}, and Imagenet~\cite{imagenet_cvpr09}). The main contributions of this work are: 
\begin{itemize}
    \item We establish theoretical bounds on the purification errors of Markov-based purifications. In particular, Theorem~\ref{theorem:ddpm_converge} show that the the adversarial noise will be removed at a distribution-level as the purification time-steps increases. On the other hand, Theorem~\ref{theorem:ddpm_time}, and~\ref{theorem:ddpm_time_upper} point out the purification at the sample-level.    
    \item We show theoretical justifications for looping the early-stages of DDPM (Theorem~\ref{theorem:ddpm_time_loop}), and the usage of Tucker decomposition (Theorem~\ref{theorem:ddpm_tf}) for adversarial purification.
    \item We introduce a Markov-based purification algorithm, called {LoRID} (Alg.~\ref{alg:LoRID}), utilizing early looping and Tucker decomposition and demonstrate rigorously its effectiveness and high performance in three real-world datasets: CIFAR-10/100, CelebA-HQ, and Imagenet.    
\end{itemize}

Our paper is organized as follows. Sect.~\ref{sect:prelim} provides the background and related work of this study. Sect.~\ref{sect:method} consists of our theoretical analysis and the description of our proposed purification LoRID. Sect.~\ref{sect:experiment} provides our experimental results, and Sect.~\ref{sect:conclusion} concludes this paper.

\section{Background and Related Work} \label{sect:prelim}

This section first briefly reviews the Denoising Diffusion Probabilistic Model~\cite{ho2020denoising}, which is the backbone of our diffusion purifications. Then, the related work about the usage of diffusion models as adversarial purifiers is discussed. Finally, we briefly discuss the Tucker decomposition, which is a component utilized by our method.

\textbf{Denoising Diffusion Probabilistic Models (DDPMs)} are a class of generative models that, during training, iteratively adding noise to input signals, then learning to denoise from the resulting noisy signal. Formally, given a data point $\bmx_0$ sampled from the data distribution $ q(\bmx_0)$, a \textit{forward diffusion process} from clean data $\bmx_0$ to $\bmx_T$ is a Markov-chain that gradually adds Gaussian noise, denoted by $\mathcal{N}$, to the data according to a variance schedule $\{\beta_t\in (0,1)\}^T_{t=1}$: $( \bmx_{1:T}|\bmx_0) := \prod^{T}_{t=1}q(\bmx_t|\bmx_{t-1})$, where
\begin{align}
   &q(\bmx_{t}|\bmx_{t-1}) := \mathcal{N}(\bmx_t;\sqrt{1-\beta_t}\bmx_{t-1},\beta_t \mathbf{I}) \label{eq:forward}
\end{align}
The objective of DDPM is to learn the joint distribution $p_{\theta}(\bmx_{0:T})$, called the reverse process, which is defined as another Markov-chain with learned Gaussian transitions  $p_{\theta}(\bmx_{0:T}) = p(\bmx_T) \prod_{t=1}^{T} p_{\theta}(\bmx_{t-1} | \bmx_t)$, where
\begin{align}
 p_{\theta}(\bmx_{t-1} | \bmx_t) &:= \mathcal{N}(\bmx_{t-1}; \boldsymbol{\mu}_{\theta}(\bmx_t, t), \mathbf{\Sigma}_{\theta}(\bmx_t, t))
\end{align}
starting with $p(\bmx_T) = \mathcal{N}(\bmx_T; \mathbf{0}, \mathbf{I})$. The mean $\boldsymbol{\mu}_{\theta}(\bmx_t, t)$ is a neural network parameterized by $\theta$, and the variance $ \mathbf{\Sigma}_{\theta}(\bmx_t, t)$ can be either time-step dependent constants~\cite{ho2020denoising} or learned by a neural network~\cite{nichol2021improved}. A notable property of the forward process is that it admits sampling $\bmx_t$ at an arbitrary time-step $t$ in closed form: using the notation $\alpha_t := 1 - \beta_t$ and $\bar{\alpha}_t := \prod_{s=1}^{t} \alpha_s$, we have
\[
q(\bmx_t | \bmx_0) = \mathcal{N}\left(\bmx_t; \sqrt{\bar{\alpha}_t} \bmx_0, (1 - \bar{\alpha}_t) \mathbf{I}\right)
\]
Using the reparameterize trick, we can define the forward diffusion process to the time-step $t$ as $f_t$:
\begin{align}
    \bmx_t = f_{t}(\bm x_0) := \sqrt{\bar{\alpha}_t} \bmx_0 + \sqrt{1 - \bar{\alpha}_t}\boldsymbol{\epsilon}_0 \label{eq:diff_xt}
\end{align}
where $ \boldsymbol{\epsilon}_0$ is  a standard Gaussian noise.

For the reverse process, the recovered signal from the time-step $t$ can be written as~\cite{ho2020denoising}:
\begin{align}
    \Tilde{\bmx}_0(t) = \frac{1}{\sqrt{\bar{\alpha}_t}} \bmx_t - \frac{\sqrt{1 - \bar{\alpha}_t}}{ \sqrt{\bar{\alpha}_t}}\boldsymbol{\epsilon}_{\theta}(\bmx_t, t)
\end{align}
where $\boldsymbol{\epsilon}_{\theta}$ is a function approximator predicting $\boldsymbol{\epsilon}$ from $\bmx_t$, i.e., the \textit{noise matching term}. Given that, we have
\begin{align}
    \Tilde{\bmx}_0(t)  - \bmx_0 &=  \frac{\sqrt{1 - \bar{\alpha}_t}}{ \sqrt{\bar{\alpha}_t}} \left( \boldsymbol{\epsilon}_0 - \boldsymbol{\epsilon}_{\theta}\left( \bmx_t, t
    \right)\right) 
\end{align}
Thus, the approximator $\boldsymbol{\epsilon}_{\theta}$ can be trained using MSE loss:
\begin{align}
    L(\theta) := 
    \mathbb{E}_{t, \bmx_0, \boldsymbol{\epsilon}} \left[ 
    \left \| 
    \boldsymbol{\epsilon} - \boldsymbol{\epsilon}_\theta
    \left(
    \bmx_t, t
    \right)
    \right \|^2
    \right] \label{eq:training_ddpm}
\end{align}

\textbf{Diffusion models as adversarial purifiers.} Diffusion-based purification schemes can be categorized into Markov-based purification (or DDPM-based), and Score-based purification, which utilize DDPM~\cite{ho2020denoising} and Score-based diffusion model~\cite{song2021scorebased} to purify the adversarial examples, respectively. In this work, we focus on the Markov-based methods, which typically diffuse the adversarial input $\bmx_a$ to a certain time-step $t$, then utilize the DDPMs to iterativly solve the reverse process as given in~\cite{ho2020denoising}:
\begin{small}
\begin{align}
    \hat{{\bmx}}_{t-1} = \frac{1}{\sqrt{{\alpha}_t}} \left( \hat{\bmx}_{t} - \frac{1-\alpha_t}{\sqrt{1 - \bar{\alpha}_t}} \boldsymbol{\epsilon}_{\theta}(\bmx_{t},t) \right) + 
   \frac{\beta_t (1-\bar{\alpha}_{t-1})}{1-\bar{\alpha}_{t}}  \boldsymbol{\epsilon} \label{eq:reverse_iterative}
\end{align}
\end{small}We denote the process of running (\ref{eq:reverse_iterative}) iteratively, starting from $\hat{\bmx}_t := \bmx_t$ to finally obtain $ \hat{{\bmx}}_{0}$ by $ r_{t}$ and write $ \hat{{\bmx}}_{0} = r_{t}(\bmx_t)$. The whole process of purification is, therefore, can be referred by the composition $ r_t \circ f_t$.

Recent Markov-based purifications often apply a modified version of (\ref{eq:reverse_iterative}). The work~\cite{blau2022threat} uses a re-scaled version of (\ref{eq:reverse_iterative}) with a larger noise term for purification. The Guided-DDPM purification~\cite{wang2022guided} introduces a guided term encouraging the purified image to be close to the adversarial image in the reverse process to protect the sample's semantics. On the other hand, DensePure~\cite{xiao2022densepure} computes multiple reversed samples using (Eq.~\ref{eq:reverse_iterative}) and determines final predictions by majority voting. We find that the recent findings of~\citet{lee2023robust} are the most closely related to this work: they observe that the \textit{gradual noise-scheduling strategy}, which involves looping several times during the early stages of the DDPM, can enhance defense mechanisms. However, there is no theoretical justification for this strategy. Furthermore, as the attackers in their threat models are unaware of this defense, it is unclear whether gradual noise-scheduling truly offer better robustness against practical white-box attackers. 




Despite the differences, all methods emphasize a too-large $t$ would damage the global label semantics from the purified sample. While the theoretical statement for this is stated in the case of Score-based purifications~\cite{nie2022DiffPure}, the counterpart for Markov-based is lacking. One of our contributions is the theoretical statement for the Markov-based in Theorem~\ref{theorem:ddpm_converge} and Theorem~\ref{theorem:ddpm_time}. Another aspect that distinguishes the technicality our method, LoRID, from previous work is the use of a large number of loops (typically between $10$ and $40$) in the early stages (about $1\%-10\%$ of the total time-step) of the DDPM, along with the implementation of Tucker decomposition. 

\textbf{Tucker Decomposition}, also known as higher-order singular value decomposition (HOSVD)~\cite{5447070}, is a mathematical technique used in multilinear algebra and data analysis, and can be viewed as an extension of the concept of singular value decomposition (SVD) for higher-dimensional data arrays or tensors~\cite{kolda2009tensor}.  Computing the Tucker decomposition of a tensor can encode the essential information and structure of the tensor into a set of core tensor and factor matrices. It's widely used in various fields such as signal processing, image processing, neuroscience, data compression, and in the line of the proposed work, feature extraction from high-dimensional data~\cite{kolda2009tensor}.

In our purification context, as the latent tensor $\mathcal{X}$ is obtained from an original image $\bmx$ via a tensorization process~\cite{bhattarai2023robust}, denoted by $ \mathcal{X}:= T(\bmx)$, the overall tensor-factorization denoising process (Fig.~\ref{lorid_structure}) can be referred by the following:
\begin{align}
    \hat{\bmx} = \textsc{TF}(\bmx + \boldsymbol{\epsilon} )
\end{align}
where $ T^{-1}$ denotes the recover of the image from the latent space, and $\textsc{TF}$ is defined as $ \textsc{TF}:= T^{-1}\circ d^{-1} \circ d \circ T$. The details of this operation is provided in Appx.~\ref{appx:tucker}.

As both Tucker and tensorization are linear transformations, the denoising-reconstruction error can be bounded as:
\begin{align}
    \| \bmx - \textsc{TF}( \bmx + \boldsymbol{\epsilon} ) \| \leq&  \| \bmx - \textsc{TF}( \bmx) \| + \| \textsc{TF}(\boldsymbol{\epsilon} )\| . \label{eq:error_bound_tf}
\end{align}
Here, the first term, denoted as $\textsc{E}_{\textsc{Tucker}}:=\| \bmx - \textsc{TF}( \bmx) \|$ represents the error introduced by not capturing the full variance in each mode of the data through the Tucker decomposition. The second term $\|\textsc{TF}(\boldsymbol{\epsilon} )\| $
represents the error caused by the original noise on $\bmx $ remained after the denoising process. $\textsc{E}_{\textsc{Tucker}}$ can be bounded further by (\cite{DeLathauwer} Property 10; \cite{Hackbusch2012TensorSA} Theorem 10.2): $
    \textsc{E}_{\textsc{Tucker}} \leq \sum_{n=1}^{N} \sum_{i_n=r_n+1}^{I_n} \left(\sigma_{i_n}^{(n)} \right)^2$,
where $\{ \sigma_{i_n}^{(n)} \}_{i_n =  1}^{I_n}  $ is the singular values of the mode-$n$ unfolding of the tensor $\mathcal{X}$.

\section{Method} \label{sect:method}

This section provides theoretical results on different aspects of Markov-based purification (\ref{eq:reverse_iterative}) and the details for our proposed adversarial purification algorithm LoRID.
\begin{itemize}
    \item Subsect.~\ref{subsect:diffusion_purification} provides Theorem~\ref{theorem:ddpm_converge} about the theoretical removal of the adversarial noise as the diffusion time-step $t$ in the Markov-based diffusion model increases at the distribution-level. It is the counterpart of Theorem 3.1 in~\cite{nie2022DiffPure} for Score-based purification.
    \item The purification error between the clean and the purified images at the sample-level are further characterized in Theorem~\ref{theorem:ddpm_time} and~\ref{theorem:ddpm_time_upper} in Subsect.~\ref{subsect:diffusion_purification}. While Theorem~\ref{theorem:ddpm_time_upper} can be viewed as an adaptation of Theorem 3.2 from Score-based to Markov-based purification, to the best of our knowledge, the lower bound on the reconstruction error in Theorem~\ref{theorem:ddpm_time} has not been previously established for any diffusion-based purification methods.
    \item Subsect.~\ref{subsect:purification_error} demonstrates how we realize our theoretical analysis into practical measurement. Particularly, We analyze the intrinsic purification error arising from the Markov-based purification process (Corollary~\ref{corollary:clean}) and identify the advantage of looping the early time-steps of the diffusion models for the purification task (Theorem~\ref{theorem:ddpm_time_loop}). The result suggests that, with the same effective diffusion-denoising steps, looping can reduce the intrinsic purification error significantly.
    \item Subsect.~\ref{subsect:purification_error} also studies and validates the usage of Tucker Decomposition combined with Markov-based purification at the high-noise regime (Theorem~\ref{theorem:ddpm_tf}).
    \item Based on the theoretical analysis, we design LoRID, the Low-Rank Iterative Diffusion method to purify adversarial noise. Its description is provided in Subsect.~\ref{subsect:methodology}.
\end{itemize}


\subsection{Markov-based Purification} \label{subsect:diffusion_purification}

Intuitively, the diffusion time-step $t$ need be large enough to remove adversarial perturbations; however, the image's semantics will also be removed as $t$ increases. That observation is captured in the following Theorem~\ref{theorem:ddpm_converge}, which states that the KL-divergence between the distributions of the clean images and the adversarial images  converges as $t$ increases:
\begin{theorem} \label{theorem:ddpm_converge}
Let $\left\{\bmx^{(i)}_t \right\}_{t \in \{0,...,T \}},  i \in \{1,2\}$ be two diffusion processes given by the forward equation (\ref{eq:forward}) of a DDPM. Denote $q^{(1)}_t$ and $q^{(2)}_t$ the distributions of $\bmx_t^{(1)}$ and $\bmx_t^{(2)}$, respectively. Then, for all $t \in \{0,..., T-1\}$, we have
\begin{align*}
    D_{KL}\left(q^{(1)}_t||q^{(2)}_t \right) \geq D_{KL}\left(q^{(1)}_{t+1}||q^{(2)}_{t+1} \right)
\end{align*}
\end{theorem}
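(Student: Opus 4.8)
The plan is to recognize the claim as the data-processing inequality for relative entropy: both chains evolve under the \emph{same} forward transition kernel $q(\bmx_{t+1}\mid\bmx_t)$ from (\ref{eq:forward}), and pushing two marginals through a common Markov kernel can only contract their KL-divergence. So the monotonicity in $t$ should follow without any Gaussian-specific computation, purely from the Markov structure shared by the two processes.

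Concretely, I would work with the joint laws over consecutive steps, $q^{(i)}(\bmx_t,\bmx_{t+1}) := q(\bmx_{t+1}\mid\bmx_t)\,q^{(i)}_t(\bmx_t)$ for $i\in\{1,2\}$, and invoke the chain rule for KL-divergence in its two factorization orders. Conditioning on $\bmx_t$ first, the joint KL splits as $D_{KL}(q^{(1)}_t\|q^{(2)}_t)$ plus an expected KL between the forward kernels; since both processes share that kernel, the latter term is identically zero, so the joint KL equals $D_{KL}(q^{(1)}_t\|q^{(2)}_t)$. Conditioning instead on $\bmx_{t+1}$, the same joint KL splits as $D_{KL}(q^{(1)}_{t+1}\|q^{(2)}_{t+1})$ plus an expected KL between the two \emph{posteriors} $q^{(i)}(\bmx_t\mid\bmx_{t+1})$, which differ in general and hence contribute a nonnegative amount. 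Equating the two decompositions and dropping the nonnegative posterior term yields $D_{KL}(q^{(1)}_t\|q^{(2)}_t)\geq D_{KL}(q^{(1)}_{t+1}\|q^{(2)}_{t+1})$, exactly the statement.

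The main thing to verify---rather than a genuine obstacle---is the well-definedness of this decomposition: I would check that the joints are mutually absolutely continuous and that Fubini licenses reordering the integrals, both of which are immediate here because the Gaussian kernel $\mathcal{N}(\bmx_t;\sqrt{1-\beta_t}\,\bmx_{t-1},\beta_t\mathbf{I})$ has a strictly positive, everywhere-finite density. One could alternatively phrase the argument as the contraction property $D_{KL}(K\mu\|K\nu)\le D_{KL}(\mu\|\nu)$ for any Markov operator $K$ and cite it directly, but writing out the two-way chain rule keeps the proof self-contained and makes the role of the shared forward kernel fully transparent.
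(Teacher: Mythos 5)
Your proposal is correct and coincides essentially verbatim with the paper's own proof: both expand the joint divergence $D_{KL}\left(q^{(1)}(\bmx_{t+1},\bmx_t)\,\|\,q^{(2)}(\bmx_{t+1},\bmx_t)\right)$ via the chain rule for relative entropy in its two factorization orders, use the shared forward kernel $q(\bmx_{t+1}\mid\bmx_t)$ to annihilate one conditional term, and drop the nonnegative posterior term to conclude. Your added remarks on absolute continuity and Fubini (justified by the strictly positive Gaussian kernel) are careful extras the paper omits, but they do not alter the argument.
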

\textit{Sketch of proof (proof in Appx~\ref{appx:theorem:ddpm_converge}).} While Theorem~\ref{theorem:ddpm_converge} resembles that stated for the Score-based purification~\cite{nie2022DiffPure}, its proof is greatly different since the DDPM's diffusion is not controlled by an Stochastic Differential Equation. Instead, we leverage the underlying Markov process governing the forward diffusion of DDPM (\ref{eq:forward}), and show $D_{KL}\left( q^{(1)}_{t+1} ||q^{(2)}_{t+1} \right) + D_{KL}\left( q^{(1)}(\bmx_{t}| \bmx_{t+1}) ||q^{(2)}(\bmx_{t} | \bmx_{t+1}) \right)  = D_{KL}\left( q^{(1)}_{t} ||q^{(2)}_{t} \right)  $ by expanding the KL-divergence between $q^{(1)}(\bmx_{t+1}, \bmx_t)$ and $q^{(2)}(\bmx_{t+1}, \bmx_t)$. Then, due to the non-negativity of the KL-divergence, we have the Theorem.


Note that Theorem~\ref{theorem:ddpm_converge} captures the purification at the distribution level. Similar to the Score-based purification~\cite{nie2022DiffPure}, we are also interested in the purification of the DDPM at the instance level. In fact,
the variational bound (Eq.~\ref{eq:training_ddpm}) suggests that the reconstruction error $\| \hat{\bmx}_0(t)  - \bmx_0  \|$ is directly proportional to the DDPM's training objective. However, that objective, $L(\theta)$,
is for all time-steps, while the purification error only depends on the one time-step, at which, the reverse process is applied to recover $\hat{\bmx}_0(t)$. Intuitively, as $t$ increases, the argument of the approximator $\sqrt{\bar{\alpha}_t} \bmx_0 + \sqrt{1 - \bar{\alpha}_t}\boldsymbol{\epsilon}$ contains less information about the noise $\boldsymbol{\epsilon} $, thus, results in a higher error. The following two Theorems formalize that intuition:

\begin{theorem} \label{theorem:ddpm_time}
Let $\left\{\bmx_t \right\}_{t \in \{0,...,T \}}$ be a diffusion process defined by the forward equation (\ref{eq:forward}) where $\bmx_0$ is the adversarial sample. i.e, $\bmx_0 = \bmx_{clean} + \boldsymbol{\epsilon}_a$. For any time $t$, we have
\begin{align}
    \mathbb{E} \left [\|\hat{\bmx}_0(t) - \bmx_{clean} \| \right]\geq  \textup{MMSE}\left( \frac{{\bar{\alpha}_t}}{{1- \bar{\alpha}_t}}  \right) - \| \boldsymbol{\epsilon}_a \|  \label{eq:error_bound}
\end{align}
where the expectation is taken over the distribution of $\bmx_{clean}$ and $\textup{MMSE}(\textsc{SNR})$ is the minimum mean-square error achievable by optimal estimation of the input given the output of Gaussian channel with a signal-to-noise ratio of $\textsc{SNR}$. The function $\textup{MMSE}(\textsc{SNR}) $ has the following form~\cite{mmse_mi}:
\begin{align}
 1 - \frac{1}{\sqrt{2\pi}} \int_{-\infty}^{\infty} e^{-y^2/2 }\textup{tanh} \left( \textsc{SNR} - \sqrt{\textsc{SNR}} y
\right) \textup{d} y. \label{eq:mmse}
\end{align}
\end{theorem}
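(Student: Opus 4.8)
The plan is to recognize the forward diffusion step as a Gaussian channel and then invoke the optimality of the conditional mean (the MMSE estimator) to lower-bound the error of the particular estimator $\hat{\bmx}_0(t)$. First I would rewrite the closed-form forward equation (\ref{eq:diff_xt}) in canonical channel form: dividing $\bmx_t = \sqrt{\bar{\alpha}_t}\bmx_0 + \sqrt{1-\bar{\alpha}_t}\boldsymbol{\epsilon}_0$ by $\sqrt{1-\bar{\alpha}_t}$ yields
\begin{align*}
    \bmz_t := \frac{\bmx_t}{\sqrt{1-\bar{\alpha}_t}} = \sqrt{\frac{\bar{\alpha}_t}{1-\bar{\alpha}_t}}\,\bmx_0 + \boldsymbol{\epsilon}_0,
\end{align*}
which is exactly a unit-variance Gaussian channel with input $\bmx_0$ and signal-to-noise ratio $\textsc{SNR} = \bar{\alpha}_t/(1-\bar{\alpha}_t)$. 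The crucial observation is that $\hat{\bmx}_0(t)$ is a measurable function of $\bmx_t$, hence of the channel output $\bmz_t$ alone; it is therefore one admissible estimator of the channel input $\bmx_0$, and the MMSE lower bound will apply to it.

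Next I would apply the defining property of the MMSE: among all estimators of $\bmx_0$ based on the output $\bmz_t$, the conditional expectation $\mathbb{E}[\bmx_0 \mid \bmz_t]$ achieves the minimum mean-square error, and by the $I$-MMSE relationship~\cite{mmse_mi} this minimum equals $\textup{MMSE}(\textsc{SNR})$ with the closed form (\ref{eq:mmse}). Since the specific estimator $\hat{\bmx}_0(t)$ cannot beat the optimum, I obtain
\begin{align*}
    \mathbb{E}\left[\|\hat{\bmx}_0(t) - \bmx_0\|\right] \geq \textup{MMSE}\left(\frac{\bar{\alpha}_t}{1-\bar{\alpha}_t}\right).
\end{align*}
I would note here that even if one uses the stochastic reverse chain (\ref{eq:reverse_iterative}), which injects independent noise, the estimator remains a (randomized) function of $\bmz_t$, so randomization cannot lower the error below the MMSE and the bound is unaffected.

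Finally, I would transfer the bound from the adversarial target $\bmx_0$ to the clean target $\bmx_{clean}$. Writing $\bmx_0 = \bmx_{clean} + \boldsymbol{\epsilon}_a$ and applying the triangle inequality gives $\|\hat{\bmx}_0(t) - \bmx_{clean}\| \geq \|\hat{\bmx}_0(t) - \bmx_0\| - \|\boldsymbol{\epsilon}_a\|$; taking expectations over the distribution of $\bmx_{clean}$ and chaining with the previous display yields the claimed bound (\ref{eq:error_bound}).

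The main obstacle I anticipate is making the MMSE step fully rigorous. One must confirm that $\hat{\bmx}_0(t)$ depends only on the channel output (carries no side information about the realization of $\boldsymbol{\epsilon}_0$), so that the optimality of the conditional mean genuinely dominates it, and one must fix the input distribution of $\bmx_0$ for which the closed form (\ref{eq:mmse}) — the binary/BPSK-input MMSE — is the exact expression. A secondary subtlety is reconciling the mean-square quantity that defines $\textup{MMSE}$ with the non-squared norm in the statement: the channel argument most directly controls $\mathbb{E}[\|\hat{\bmx}_0(t)-\bmx_0\|^2]$, so I would either adopt the squared-error convention throughout or argue in the per-coordinate regime where the two agree, taking care to preserve the direction of the inequality.
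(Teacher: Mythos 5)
Your proposal follows essentially the same route as the paper's proof: recast the forward step (\ref{eq:diff_xt}) as a Gaussian channel $\mathbf{y}_t = \bigl(\sqrt{\bar{\alpha}_t}/\sqrt{1-\bar{\alpha}_t}\bigr)\bmx_0 + \boldsymbol{\epsilon}_0$ with $\textsc{SNR} = \bar{\alpha}_t/(1-\bar{\alpha}_t)$, lower-bound the error of $\hat{\bmx}_0(t)$ by that of the optimal estimator $\hat{\bmx}^*_0(\mathbf{y}_t)$, identify the latter with $\textup{MMSE}\left(\bar{\alpha}_t/(1-\bar{\alpha}_t)\right)$ via the I-MMSE relation of \cite{mmse_mi}, and transfer from $\bmx_0$ to $\bmx_{clean}$ by the triangle inequality. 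The subtleties you flag at the end --- the non-squared norm in the theorem versus the mean-square definition of $\textup{MMSE}$, and the fact that the closed form (\ref{eq:mmse}) presumes a specific (binary) input distribution --- are glossed over in the paper's own proof as well, so your treatment is, if anything, the more careful one.
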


\begin{theorem} \label{theorem:ddpm_time_upper}
Additionally to the conditions stated in Theorem~\ref{theorem:ddpm_time}, if the DDPM is able to recover the original signal $\bmx_0 $ within an error $\boldsymbol{\delta}_{\textsc{DDPM}}(t)$ in the expectation, i.e., for all $t$,
\begin{align}
     \mathbb{E} \left[ 
    \left \| 
   \hat{\bmx}_0(t)  - \bmx_0
    \right \|
    \right] \leq \mathbb{E} \left[ 
    \left \| 
   \hat{\bmx}^*_0( \mathbf{y}_t)  - \bmx_0
    \right \|
    \right]  + \boldsymbol{\delta}_{\textsc{DDPM}}(t) \label{eq:assume_delta_ddpm}
\end{align}
 where $\hat{\bmx}^*_0( \mathbf{y}_t) $ is  the best estimator of $\bmx_0$ given $\mathbf{y}_t  = ({\sqrt{\bar{\alpha}_t}}/{\sqrt{1- \bar{\alpha}_t}}) \bmx_0 + \boldsymbol{\epsilon}_0$, then, we have the reconstructed error $\mathbb{E} \left [\|\hat{\bmx}_0(t) - \bmx_{clean} \| \right]$ is upper-bounded by:
\begin{align}
     \textup{MMSE}\left( \frac{{\bar{\alpha}_t}}{{1- \bar{\alpha}_t}}  \right) + \boldsymbol{\delta}_{\textsc{DDPM}}(t) + \| \boldsymbol{\epsilon}_a \|  \label{eq:error_upper_bound}
\end{align}
\end{theorem}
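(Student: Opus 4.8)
The plan is to combine two applications of the triangle inequality with the defining optimality of the Bayes estimator, mirroring the argument behind the lower bound in Theorem~\ref{theorem:ddpm_time} but run in the opposite direction. First I would decompose the target error through the clean/adversarial split. Since $\bmx_0 = \bmx_{clean} + \boldsymbol{\epsilon}_a$, the triangle inequality gives $\|\hat{\bmx}_0(t) - \bmx_{clean}\| \leq \|\hat{\bmx}_0(t) - \bmx_0\| + \|\boldsymbol{\epsilon}_a\|$ pointwise; taking expectation over the distribution of $\bmx_{clean}$ isolates the term $\mathbb{E}[\|\hat{\bmx}_0(t) - \bmx_0\|]$ together with the fixed additive constant $\|\boldsymbol{\epsilon}_a\|$, which already supplies the last summand of (\ref{eq:error_upper_bound}).

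Next I would feed the recoverability hypothesis (\ref{eq:assume_delta_ddpm}) directly into this estimate: it bounds $\mathbb{E}[\|\hat{\bmx}_0(t) - \bmx_0\|]$ by $\mathbb{E}[\|\hat{\bmx}^*_0(\mathbf{y}_t) - \bmx_0\|] + \boldsymbol{\delta}_{\textsc{DDPM}}(t)$, where $\boldsymbol{\delta}_{\textsc{DDPM}}(t)$ is precisely the suboptimality gap separating the DDPM reconstruction from the optimal estimator, and thus contributes the second summand of (\ref{eq:error_upper_bound}).

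The crux is identifying $\mathbb{E}[\|\hat{\bmx}^*_0(\mathbf{y}_t) - \bmx_0\|]$ with $\text{MMSE}(\bar{\alpha}_t/(1-\bar{\alpha}_t))$. Here I would rescale the forward map (\ref{eq:diff_xt}): dividing $\bmx_t = \sqrt{\bar{\alpha}_t}\bmx_0 + \sqrt{1-\bar{\alpha}_t}\boldsymbol{\epsilon}_0$ by $\sqrt{1-\bar{\alpha}_t}$ yields exactly $\mathbf{y}_t = (\sqrt{\bar{\alpha}_t}/\sqrt{1-\bar{\alpha}_t})\bmx_0 + \boldsymbol{\epsilon}_0$, an additive-Gaussian channel with unit-variance noise and signal-to-noise ratio $\bar{\alpha}_t/(1-\bar{\alpha}_t)$. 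Because this rescaling is invertible (for $\bar{\alpha}_t \neq 1$), $\mathbf{y}_t$ is an observation equivalent to $\bmx_t$, so the best estimator $\hat{\bmx}^*_0(\mathbf{y}_t)$ attains, by definition, the minimum error $\text{MMSE}(\bar{\alpha}_t/(1-\bar{\alpha}_t))$ of this channel. Chaining the three inequalities then produces (\ref{eq:error_upper_bound}).

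The main obstacle is making the MMSE identification precise and keeping it consistent with the companion lower bound. One must verify that the expected-norm objective appearing in the statement genuinely coincides with the quantity the $\text{MMSE}$ function measures — so that the same ratio $\bar{\alpha}_t/(1-\bar{\alpha}_t)$ surfaces as the SNR exactly as in Theorem~\ref{theorem:ddpm_time} — and that the optimality of $\hat{\bmx}^*_0$ is taken with respect to that same loss. Everything else reduces to the triangle inequality and the hypothesis (\ref{eq:assume_delta_ddpm}); the genuine content sits in the channel-to-MMSE correspondence, which is the machinery shared with the lower-bound proof.
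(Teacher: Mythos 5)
Your proposal is correct and follows essentially the same route as the paper's proof: triangle inequality on $\|\hat{\bmx}_0(t) - \bmx_0 - \boldsymbol{\epsilon}_a\|$, then the hypothesis (\ref{eq:assume_delta_ddpm}), then the identification $\mathbb{E}\left[\left\|\hat{\bmx}^*_0(\mathbf{y}_t) - \bmx_0\right\|\right] = \textup{MMSE}\left(\bar{\alpha}_t/(1-\bar{\alpha}_t)\right)$ inherited from the Gaussian-channel view set up in the proof of Theorem~\ref{theorem:ddpm_time}. Your explicit remark that the rescaling of (\ref{eq:diff_xt}) by $\sqrt{1-\bar{\alpha}_t}$ is invertible (so $\mathbf{y}_t$ carries the same information as $\bmx_t$) only makes precise what the paper treats as definitional, so there is no substantive difference.
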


\textit{Sketch of proofs (proofs in Appx~\ref{appx:theorem:ddpm_time} and~\ref{appx:theorem:ddpm_time_upper}).}  The proofs of both theorems consider the forwarding diffusion of the DDPM as a Gaussian channel, and the purification task is equivalent to the reconstruction of the channel's input. Given a purification time-step $t$, i.e., the time-step we decide to start the denoising/purification process, the equivalent Gaussian channel would have an effective signal-to-noise ($\textsc{SNR}$) ratio of ${{\bar{\alpha}_t}}/({{1- \bar{\alpha}_t}})$. Intuitively, the higher the time-step, the smaller the value of ${{\bar{\alpha}_t}}/({{1- \bar{\alpha}_t}})$, and, even with an optimal denoiser, the more inherent error are introduced to the purified sample. In fact, the expression $\textup{MMSE}({{\bar{\alpha}_t}}/({{1- \bar{\alpha}_t}}))$ appearing in both theorems capture that intrinsic error. Unfortunately, there is currently no closed-form for that expression. We follow previous work studying noisy Gaussian channel~\cite{mmse_mi} and provide its integral form in expression (\ref{eq:mmse}).

\textbf{Remark.}\textit{ Regarding $\boldsymbol{\delta}_{\textsc{DDPM}}(t)$, it captures how well the trained-DDPM can recover the input given its noisy signal at time-step $t$. The assumption that $\boldsymbol{\delta}_{\textsc{DDPM}}(t)$ bounds the reconstruction error (\ref{eq:assume_delta_ddpm}) is a weaker version of the assumption made by~\cite{song2021scorebased} in the analysis of the Score-based diffusion, which is also utilized to upper-bound the error induced by Score-based purification~\cite{ho2020denoising}. In fact, both works assume the Score-based diffusion model can \textbf{perfectly} learn the score function $\nabla_x \log p(\bmx_0)$ to establish their theoretical results. }

\begin{figure}[ht]
		\centering
\includegraphics[width=0.995\linewidth]{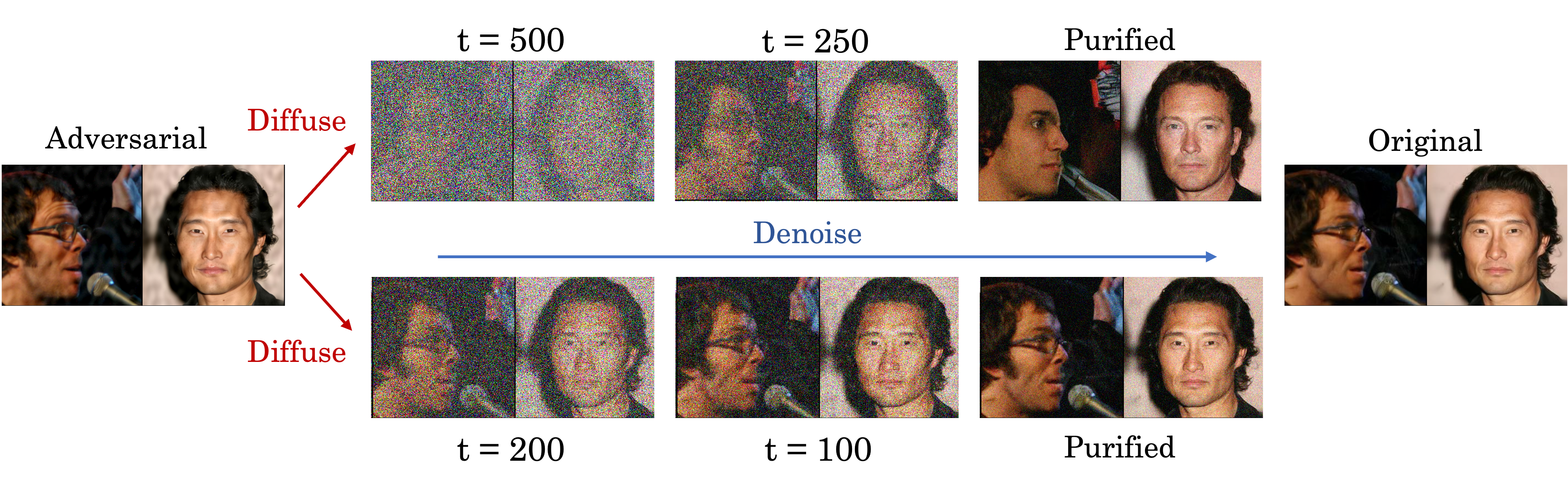}
		\caption{Illustration of adversarial purification using DDPM. The adversarial samples (left) is purified from the time-step $t=200$ (bottom) and $t=500$ (top) to recover the original samples (right). The middles  show $\hat{{\bmx}}_{t}$ (Equation (\ref{eq:reverse_iterative})) obtained by iteratively denoising to the indicated intermediate time-steps. The top purification with a too large time-step induces unavoidable error (Theorem~\ref{theorem:ddpm_time}).}
		\label{fig:illu_purified}
\end{figure}

To conclude this subsection, we illustrate the impact of the time-step $t$ on the purification process based on DDPM in Fig.~\ref{fig:illu_purified}. When $t$ is chosen appropriately, all the terms on the right-hand-side of (\ref{eq:error_upper_bound}) of Theorem~\ref{theorem:ddpm_time_upper} are controlled, which enforces a small difference between the clean input and and recovered signal. This means not only the adversarial noises are removed but also the purified images maintain the semantic of the original data. This is reflected in the purified images at the bottom of Fig.~\ref{fig:illu_purified})
 However, when $t$ is too large as illustrated at the top of Fig.~\ref{fig:illu_purified}, the diffusion-based purification induces an intrinsic error reflected in the term $\textup{MMSE}\left( {{\bar{\alpha}_t}}/{{(1- \bar{\alpha}_t} )}  \right) $ of Theorem~\ref{theorem:ddpm_time}. This error makes the purified images inevitably different from the original signal.


\subsection{Controlling Purification Error} \label{subsect:purification_error}

This subsection studies the inherent error introduced by the purification process and demonstrates why it instigates a better purification scheme based on looping the early stage of the DDPM and the utilization of Tucker decomposition. 

Our analysis starts with the consideration of the trivial case in which there is no adversarial noise. By combining the two Theorems~\ref{theorem:ddpm_time} and~\ref{theorem:ddpm_time_upper}, we have the following corollary:
\begin{corollary} \label{corollary:clean}
Given the assumptions in Theorem~\ref{theorem:ddpm_time_upper}, the intrinsic purification error on a clean purification input $\bmx_0 = \bmx_{clean}$, i.e., $ \boldsymbol{\epsilon}_a = 0$, is bounded by
\begin{align}
      &\textup{MMSE}\left( \frac{{\bar{\alpha}_t}}{{1- \bar{\alpha}_t}}  \right) \leq \mathbb{E} \left [\|\hat{\bmx}_0(t) - \bmx_{clean} \| \right]  \nonumber \\
      \leq &  \textup{MMSE}\left( \frac{{\bar{\alpha}_t}}{{1- \bar{\alpha}_t}}  \right) + \boldsymbol{\delta}_{\textsc{DDPM}}(t)  \label{eq:error_clean}
\end{align}
\end{corollary}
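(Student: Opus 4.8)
The plan is to obtain the corollary as an immediate specialization of Theorems~\ref{theorem:ddpm_time} and~\ref{theorem:ddpm_time_upper} to the noise-free regime $\boldsymbol{\epsilon}_a = 0$. Since the corollary explicitly assumes the hypotheses of Theorem~\ref{theorem:ddpm_time_upper}, both the lower bound (\ref{eq:error_bound}) and the upper bound (\ref{eq:error_upper_bound}) hold simultaneously for the same quantity $\mathbb{E}[\|\hat{\bmx}_0(t) - \bmx_{clean}\|]$; the task is then simply to set $\|\boldsymbol{\epsilon}_a\| = 0$ in each and chain the two resulting inequalities.

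Concretely, I would first record that $\boldsymbol{\epsilon}_a = 0$ forces $\bmx_0 = \bmx_{clean}$, so that the two error quantities appearing in the parent results — the reconstruction error $\hat{\bmx}_0(t) - \bmx_0$ used in hypothesis (\ref{eq:assume_delta_ddpm}) and the purification error $\hat{\bmx}_0(t) - \bmx_{clean}$ appearing in the conclusions — coincide exactly. This removes any discrepancy between the two targets and lets both bounds be read off verbatim for $\mathbb{E}[\|\hat{\bmx}_0(t) - \bmx_{clean}\|]$.

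Substitution then yields the result directly. From Theorem~\ref{theorem:ddpm_time}, dropping the vanishing term $-\|\boldsymbol{\epsilon}_a\|$ gives $\mathbb{E}[\|\hat{\bmx}_0(t) - \bmx_{clean}\|] \ge \textup{MMSE}(\bar{\alpha}_t/(1-\bar{\alpha}_t))$, which is the left inequality of (\ref{eq:error_clean}); from Theorem~\ref{theorem:ddpm_time_upper}, dropping the vanishing term $+\|\boldsymbol{\epsilon}_a\|$ gives $\mathbb{E}[\|\hat{\bmx}_0(t) - \bmx_{clean}\|] \le \textup{MMSE}(\bar{\alpha}_t/(1-\bar{\alpha}_t)) + \boldsymbol{\delta}_{\textsc{DDPM}}(t)$, the right inequality. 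Combining the two produces (\ref{eq:error_clean}).

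There is essentially no technical obstacle here: the entire content of the corollary is already carried by the two parent theorems, and what remains is a clean substitution. The only point that warrants an explicit sentence of justification is the identification $\bmx_0 = \bmx_{clean}$ noted above, which guarantees that the hypothesis and the conclusion of Theorem~\ref{theorem:ddpm_time_upper} refer to the same error once $\boldsymbol{\epsilon}_a$ vanishes. Beyond that bookkeeping, the statement is a corollary in the literal sense and requires no new estimate.
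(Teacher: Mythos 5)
Your proposal is correct and matches the paper exactly: the paper derives Corollary~\ref{corollary:clean} precisely by combining Theorems~\ref{theorem:ddpm_time} and~\ref{theorem:ddpm_time_upper} with $\boldsymbol{\epsilon}_a = 0$, which makes the $\|\boldsymbol{\epsilon}_a\|$ terms vanish in both bounds. Your added remark that $\bmx_0 = \bmx_{clean}$ identifies the reconstruction error of hypothesis (\ref{eq:assume_delta_ddpm}) with the purification error in the conclusions is a sound piece of bookkeeping the paper leaves implicit.
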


The corollary reflects the strong connection between the purification error and the MMSE term. Especially, when the DDPM is well-trained, the gap $\boldsymbol{\delta}_{\textsc{DDPM}}(t)$ between the lower and upper bounds becomes small, and $\mathbb{E} \left [\|\hat{\bmx}_0(t) - \bmx_{clean} \| \right]$ becomes more similar to $\textup{MMSE}\left( {{\bar{\alpha}_t}}/{{(1- \bar{\alpha}_t)
}}  \right) $. This observation motivates us to investigate purification schemes that minimize the MMSE. 

\textbf{Looping at early time-steps.} Several recent work observed that repetitive usage of the diffusion-denoising steps in parallel~\cite{wang2022guided, nie2022DiffPure} or sequential \cite{lee2023robust} can enhance system robustness against adversarial attacks. However, too many diffusion-denoising calls would not only diminish robustness gain but also degrade the clean accuracy significantly. Hence, we tackle the following question: \textit{Given a fixed number of denoiser's call, i.e., total number of diffusion-denoising steps, for the sake of adversarial purification, should we diffusion-denoising multiple loops of the DDPM at the earlier time-steps or utilize a few loops with large time-steps?} 

We now provide theoretical justification for the usage of multiple  loops in purification. Specifically, we want to compare the purification to the time-step $t$, i.e.,  denoted by $r_t \circ f_t$, and the purification of $L$ loops to the time-step $t/L$,  i.e., $(r_{t/L} \circ f_{t/L} )^{L}$. By denoting the output of $l$  times DDPM-purification to time-step $t$, $ \hat{\bmx}^{l}_0(t) :=  (r_{t/L} \circ f_{t/L} )^{l} (\bmx_0)$, we formalize the impact of looping purification via the following Theorem~\ref{theorem:ddpm_time_loop}:


\begin{theorem} \label{theorem:ddpm_time_loop}
Let $\left\{\bmx_t \right\}_{t =0}^T$ be a diffusion process defined by the forward (\ref{eq:forward}) where $\bmx_0$ is the adversarial sample. i.e, $\bmx_0 = \bmx_{clean} + \boldsymbol{\epsilon}_a$. For any given time $t$, we have the reconstructed error $ \mathbb{E} \left [\|\hat{\bmx}^L_0(t/L) - \bmx_{clean} \| \right] $ is upper-bouned by:
\begin{small}
\begin{align}
      L \times \left(  \textup{MMSE}\left( \frac{{\bar{\alpha}_{t/L}}}{{1- \bar{\alpha}_{t/L}}}  \right)  + \boldsymbol{\delta}_{\textsc{DDPM}}\left( \frac{t}{L}\right)  \right)  + \| \boldsymbol{\epsilon}_a \|  \label{eq:error_bound_loop}
\end{align}
\end{small}
where the expectation is taken over the distribution of $\bmx_{clean}$ (Proof in Appx.~\ref{appx:theorem:ddpm_time_loop}).
\end{theorem}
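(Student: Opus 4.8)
The plan is to reduce the $L$-loop guarantee to $L$ copies of the single-call bound that underlies Theorem~\ref{theorem:ddpm_time_upper}, glued together by a telescoping triangle inequality. The key observation is that Theorem~\ref{theorem:ddpm_time_upper} is at heart a bound on the gap between the \emph{input} and the \emph{output} of one purification: combining assumption~(\ref{eq:assume_delta_ddpm}) with the identity $\mathbb{E}[\|\hat{\bmx}^*_0(\mathbf{y}_t) - \bmx_0\|] = \textup{MMSE}(\bar{\alpha}_t/(1-\bar{\alpha}_t))$ shows that one purification $r_s \circ f_s$ to time-step $s$, applied to an arbitrary input $\bmz$, satisfies
\[
\mathbb{E}\!\left[\|(r_s \circ f_s)(\bmz) - \bmz\|\right] \le \textup{MMSE}\!\left(\tfrac{\bar{\alpha}_s}{1-\bar{\alpha}_s}\right) + \boldsymbol{\delta}_{\textsc{DDPM}}(s).
\]
This per-call inequality is the reusable building block, and the $\|\boldsymbol{\epsilon}_a\|$ term of Theorem~\ref{theorem:ddpm_time_upper} is precisely what enters when the input is finally compared with $\bmx_{clean}$ through $\|\bmx_0 - \bmx_{clean}\| = \|\boldsymbol{\epsilon}_a\|$.

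First I would fix $s := t/L$ and write $\bmx^{(l)} := \hat{\bmx}^l_0(t/L) = (r_s \circ f_s)^l(\bmx_0)$, so $\bmx^{(0)} = \bmx_0 = \bmx_{clean} + \boldsymbol{\epsilon}_a$ and every loop is one purification call at the fixed time-step $s$ taking $\bmx^{(l-1)}$ to $\bmx^{(l)}$. A telescoping triangle inequality then yields $\|\bmx^{(L)} - \bmx_{clean}\| \le \sum_{l=1}^{L}\|\bmx^{(l)} - \bmx^{(l-1)}\| + \|\bmx^{(0)} - \bmx_{clean}\|$, whose last term equals $\|\boldsymbol{\epsilon}_a\|$. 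Taking expectations and applying the per-call inequality to each summand --- with the same signal-to-noise ratio $\bar{\alpha}_s/(1-\bar{\alpha}_s)$ at every loop because $s = t/L$ is held fixed --- collapses the sum to $L \times (\textup{MMSE}(\bar{\alpha}_{t/L}/(1-\bar{\alpha}_{t/L})) + \boldsymbol{\delta}_{\textsc{DDPM}}(t/L))$, which together with the residual $\|\boldsymbol{\epsilon}_a\|$ is exactly the bound~(\ref{eq:error_bound_loop}).

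The main obstacle is justifying the per-call inequality at loops $l \ge 2$, where the input $\bmx^{(l-1)}$ is itself a random variable rather than the fixed adversarial sample assumed in Theorem~\ref{theorem:ddpm_time_upper}. I would resolve this by conditioning: each loop injects fresh forward-diffusion noise $\boldsymbol{\epsilon}_0$ (and reverse-process noise), so conditioned on $\bmx^{(l-1)}$ the map $r_s \circ f_s$ is an independent purification for which the single-call bound holds, and the tower property removes the conditioning after summation. The subtler point is that the value $\textup{MMSE}(\bar{\alpha}_s/(1-\bar{\alpha}_s))$ must stay valid as the effective input distribution drifts from loop to loop; here I would lean on the normalization convention under which the integral form~(\ref{eq:mmse}) is stated --- the same normalized-signal setup used in Fig.~\ref{fig:msse_vs_max_iter} --- so that each loop's reconstruction error is controlled by the common ratio $\bar{\alpha}_{t/L}/(1-\bar{\alpha}_{t/L})$ and the per-call bound is uniform across all $L$ calls.
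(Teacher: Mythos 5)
Your proposal is correct and follows essentially the same route as the paper's proof in Appendix~\ref{appx:theorem:ddpm_time_loop}: a telescoping triangle inequality over the $L$ loops (with the residual $\|\bmx_0 - \bmx_{clean}\| = \|\boldsymbol{\epsilon}_a\|$), followed by applying the $\boldsymbol{\delta}_{\textsc{DDPM}}$ assumption and the identification of the optimal-estimator error with $\textup{MMSE}\left(\bar{\alpha}_{t/L}/(1-\bar{\alpha}_{t/L})\right)$ to each of the $L$ per-loop gaps. Your closing remarks on conditioning and on the drift of the input distribution across loops are in fact more careful than the paper, which silently treats each $\hat{\bmx}^{l}_0(t/L)$ as a valid channel input with the same MMSE value.
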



Note that the upper-bound on the reconstruction error of $(r_{t/L} \circ f_{t/L} )^{L}$ is controlled by $L \times \textup{MMSE}\left( {{\bar{\alpha}_{t/L}}} / ({{1- \bar{\alpha}_{t/L}}} ) \right) $, instead of $ \textup{MMSE}\left( {{\bar{\alpha}_{t}}} / ({{1- \bar{\alpha}_{t}}} ) \right) $ as in the vanilla purification scheme $r_t \circ f_t$. For an illustration of the impact of looping the diffusion-denoising, we consider the input to compute the MMSE as standard Gaussian. The MMSE is then given by $ \textup{MMSE}(\textsc{SNR}) = 1 /(1 + \textsc{SNR})$ (instead of the integral form (\ref{eq:mmse})). We further take the values of $\bar{\alpha}_t$ in DDPM~\cite{ho2020denoising} and plot $L \times \textup{MMSE}\left( {{\bar{\alpha}_{t/L}}} / ({{1- \bar{\alpha}_{t/L}}} ) \right) $ as a function of $L$ in Fig.~\ref{fig:msse_vs_max_iter}. The result shows that purification at a small time-step with a large number of iteration is greatly beneficial for the purification error.

\textbf{Tucker Decomposition for High-noise Regime.} We now study the utilization of DDPM and Tucker Decomposition to purify the adversarial samples, which is characterized by the operations $r_{t}  \circ f_{t}$ and $\textsc{TF}= T^{-1}\circ d^{-1} \circ d \circ T$. From the previous analysis, the reconstruction error induced by the two methods are bounded by:
\begin{small}
\begin{align}
    \textsc{MSE}_{r_{t}  \circ f_{t}}(\boldsymbol{\epsilon}_a) &\leq \textup{MMSE}\left( \frac{{\bar{\alpha}_t}}{{1- \bar{\alpha}_t}}  \right) + \boldsymbol{\delta}_{\textsc{DDPM}}(t) + \| \boldsymbol{\epsilon}_a \| \label{eq:error_bound_ddpm_recall} \\
     \textsc{MSE}_{\textsc{TF}}(\boldsymbol{\epsilon}_a) &\leq  \textsc{E}_{\textsc{Tucker}} + \left \| \textsc{TF}(\boldsymbol{\epsilon}_a ) \right \|  \label{eq:error_bound_tf_recall}
\end{align}
\end{small}where (\ref{eq:error_bound_ddpm_recall}) is from Theorem~\ref{theorem:ddpm_time_upper} and (\ref{eq:error_bound_tf_recall}) is from (\ref{eq:error_bound_tf}). Here, $ \textsc{MSE}_{r_{t}  \circ f_{t}}(\boldsymbol{\epsilon}_a)$ and $\textsc{MSE}_{\textsc{TF}}(\boldsymbol{\epsilon}_a)$ denote the reconstruction error of the $r_{t}  \circ f_{t}$ and  $\textsc{TF}$ purification schemes (stated in (\ref{eq:error_upper_bound}) and (\ref{eq:error_bound_tf}), respectively). We now provide the upper-bounds of an integration of Tucker Decomposition into DDPM purification in the following Theorem~\ref{theorem:ddpm_tf}.

\begin{theorem} \label{theorem:ddpm_tf}
    The reconstruction errors introduced of the purification $ r_{t}  \circ f_{t} \circ \textsc{TF}$ is bounded by:
    \begin{align}
    \textsc{MSE}_{r_{t}  \circ f_{t} \circ \textsc{TF} }(\boldsymbol{\epsilon}_a) & \leq \textup{MMSE}\left( \frac{{\bar{\alpha}_t}}{{1- \bar{\alpha}_t}}  \right) + \boldsymbol{\delta}_{\textsc{DDPM}}(t) \nonumber \\ &+ \textsc{E}_{\textsc{Tucker}} + \left \| \textsc{TF}(\boldsymbol{\epsilon}_a ) \right \| \label{eq:error_bound_tfddpm}
\end{align}
(Proof in Appx.~\ref{appx:theorem:ddpm_tf})
\end{theorem}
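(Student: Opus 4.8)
The plan is to view the composite purification $r_t \circ f_t \circ \textsc{TF}$ as the ordinary DDPM purification $r_t \circ f_t$ applied to a \emph{pre-denoised} input, and to reduce the claim to Theorem~\ref{theorem:ddpm_time_upper} by folding the Tucker stage into an effective adversarial perturbation. The linearity of $\textsc{TF}$ is what makes this work cleanly, keeping the two error sources additive rather than compounding them.

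First I would exploit that the tensorization $T$ and the Tucker projection $d$ are linear, so $\textsc{TF} = T^{-1}\circ d^{-1}\circ d\circ T$ is linear too. Applying it to the adversarial input $\bmx_a = \bmx_{clean} + \boldsymbol{\epsilon}_a$ gives
\begin{align*}
\textsc{TF}(\bmx_a) = \textsc{TF}(\bmx_{clean}) + \textsc{TF}(\boldsymbol{\epsilon}_a) = \bmx_{clean} + \boldsymbol{\epsilon}_a',
\end{align*}
where I define the effective perturbation $\boldsymbol{\epsilon}_a' := \bigl(\textsc{TF}(\bmx_{clean}) - \bmx_{clean}\bigr) + \textsc{TF}(\boldsymbol{\epsilon}_a)$. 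This rewrites the Tucker output as a clean image corrupted by a new perturbation that collects exactly two error sources: the Tucker truncation error on the clean signal and the residual of the adversarial noise that survives the decomposition.

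Next I would feed $\bmx_{clean} + \boldsymbol{\epsilon}_a'$ into the DDPM purification and invoke Theorem~\ref{theorem:ddpm_time_upper} with $\boldsymbol{\epsilon}_a'$ playing the role of the adversarial perturbation. Since $\textsc{MSE}_{r_t \circ f_t \circ \textsc{TF}}(\boldsymbol{\epsilon}_a) = \textsc{MSE}_{r_t \circ f_t}(\boldsymbol{\epsilon}_a')$ by construction, the theorem immediately yields
\begin{align*}
\textsc{MSE}_{r_t \circ f_t \circ \textsc{TF}}(\boldsymbol{\epsilon}_a) \leq \textup{MMSE}\!\left(\frac{\bar{\alpha}_t}{1-\bar{\alpha}_t}\right) + \boldsymbol{\delta}_{\textsc{DDPM}}(t) + \|\boldsymbol{\epsilon}_a'\|.
\end{align*}
A triangle inequality on the effective perturbation then finishes the estimate:
\begin{align*}
\|\boldsymbol{\epsilon}_a'\| \leq \|\textsc{TF}(\bmx_{clean}) - \bmx_{clean}\| + \|\textsc{TF}(\boldsymbol{\epsilon}_a)\| = \textsc{E}_{\textsc{Tucker}} + \|\textsc{TF}(\boldsymbol{\epsilon}_a)\|,
\end{align*}
where the first summand is precisely $\textsc{E}_{\textsc{Tucker}}$ by its definition. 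Substituting recovers the stated bound (\ref{eq:error_bound_tfddpm}).

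I expect the only delicate point to be the application of Theorem~\ref{theorem:ddpm_time_upper} with a \emph{data-dependent} effective perturbation: unlike the fixed $\boldsymbol{\epsilon}_a$ in the original statement, $\boldsymbol{\epsilon}_a'$ depends on $\bmx_{clean}$ through the Tucker truncation term, so the $\|\boldsymbol{\epsilon}_a'\|$ in the bound should properly be read under the same expectation over $\bmx_{clean}$ already present in that theorem. I would therefore carry the triangle inequality inside that expectation and interpret $\textsc{E}_{\textsc{Tucker}}$ and $\|\textsc{TF}(\boldsymbol{\epsilon}_a)\|$ either as worst-case bounds or as their expected values, so the additive form of the right-hand side is preserved. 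Once this bookkeeping is settled, no further estimates are needed—the result is essentially the composition of the two previously established single-stage bounds, with linearity doing the real work.
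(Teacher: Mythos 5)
Your proposal is correct and follows essentially the same route as the paper: your effective perturbation $\boldsymbol{\epsilon}_a'$ is exactly the paper's $\boldsymbol{\epsilon}^{\textsc{TF}}_a := \textsc{TF}(\bmx + \boldsymbol{\epsilon}_a) - \bmx$, and the paper likewise reduces to Theorem~\ref{theorem:ddpm_time_upper} with this effective noise and then bounds $\|\boldsymbol{\epsilon}^{\textsc{TF}}_a\|$ by $\textsc{E}_{\textsc{Tucker}} + \|\textsc{TF}(\boldsymbol{\epsilon}_a)\|$ via the linearity/triangle-inequality bound (\ref{eq:error_bound_tf}). Your closing remark about the data-dependence of the effective perturbation is a legitimate fine point that the paper's proof also glosses over, but it does not change the argument.
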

Intuitively, comparing to the purification $ r_{t}  \circ f_{t}$, this purification process $ r_{t}  \circ f_{t} \circ \textsc{TF} $ have a better upper bound when the Tucker Decomposition can reduce the adversarial noise before forwarding the signal to the DDPM, i.e., when $\textsc{E}_{\textsc{Tucker}} + \left \| \textsc{TF}(\boldsymbol{\epsilon}_a ) \right \|  < \left \| \boldsymbol{\epsilon}_a  \right \|$, which suggests the usage of Tucker Decomposition at a high-adversarial-noise regime.

\subsection{LoRID: Low-Rank Iterative Diffusion for Adversarial Purification} \label{subsect:methodology}

Based on the above analysis, we propose LoRID, Low-Rank Iterative Diffusion algorithm for adversarial furification. Generally, LoRID consists of two major steps: Tensor factorization, and diffision-denoising. So far, our manuscript has considered four different configurations of LoRID, depending on the usage of looping and on how the TF and diffusions are coupled: Tensor-factorization $\textsc{TF}$, diffusion-denoising $r_t \circ f_t$, looping $(r_{t/L} \circ f_{t/L} )^{L}$, and Tensor-factorization with diffusion-denoising $\textsc{TF} \circ r_t \circ f_t$. However, the default configuration that we refer to with LoRID would utilize both Tucker Decomposition (step 1) and multiple loops of diffusion-denoising (step 2), which can be described by the expression $\textsc{TF} \circ (r_{t/L} \circ f_{t/L} )^{L} $. The pseudo-code of LoRID is described in Appendix.~\ref{appx:algorithm}.

\section{Experiments} \label{sect:experiment}
This section is about our experimental setting and robustness results: Subsect.~\ref{subsect:exp_set} highlights the experimental settings and Subsect.~\ref{sect:exp:robust_results} reports our experimental results.



\subsection{Experimental Setting}  \label{subsect:exp_set}

\paragraph{Datasets and attacked architectures.} We evaluate LoRID on CIFAR-10/100~\cite{rabanser2017introduction}, CelebA-HQ~\cite{karras2018progressivegrowinggansimproved}, and ImageNet~\cite{imagenet_cvpr09}. Comparisons are made against SOTA defense methods reported by RobustBench~\cite{croce2021robustbench} on CIFAR-10 and ImageNet, and against DiffPure~\cite{nie2022DiffPure}, a score-based diffusion purifier, on CIFAR-10, ImageNet, and CelebA-HQ. We use the standard WideResNet~\cite{zagoruyko2017wide} architecture for classification, evaluating defenses using standard accuracy (pre-perturbation) and robust accuracy (post-perturbation). When the gradients is not needed (black-box setting) in CIFAR-10, all methods are evaluated 10000 test images. On the other hand, due to the high computational cost of computing gradients for adaptive attacks against diffusion-based defenses, we assess the methods on a fixed subset of 512 randomly sampled test images, consistent with previous studies~\cite{nie2022DiffPure, lee2023robust}. Further experimental details are provided in Appx.~\ref{append:Implmentation} with EOT=20.

\paragraph{Attacker settings.} We consider two common threat models: black-box and white-box. In both scenarios, the attacker has full knowledge of the classifier. However, only in the white-box setting, the attacker also knows about the purification scheme.~\footnote{In our white-box setting, the attacker is aware of both $t$ and $L$ in our LoRID framework and can fully backpropagate through the DDPM, making this scenario even stronger than the white-box assumption used by~\citet{lee2023robust}.
} For black-box, we adapt~\cite{nie2022DiffPure, lee2023robust} and evaluate defense methods against AutoAttack~\cite{croce2020reliable} in CIFAR-10/100 and BPDA+EOT~\cite{10.1145/3524619} in CelebA-HQ. For white-box, we also follow the literature and consider AutoAttack and PGD+EOT~\cite{DBLP:journals/corr/abs-1907-00895}. 

However, white-box attacks require gradient backpropagation through the diffusion-denoising path, causing memory usage to increase linearly with diffusion step $t$. This makes exact gradient attacks infeasible on larger datasets like CelebA-HQ and ImageNet~\cite{kang2024diffattack}. Therefore, all existing work rely on some approximations of the gradients to conduct white-box attacks on those dataset~\cite{nie2022DiffPure, lee2023robust}.\footnote{While the \textit{adjoint}~\cite{nie2022DiffPure} against the Score-based purification is claimed to be exact, it relies on underlying numerical solvers and they can introduce significant error. We observe that using adjoint-gradients results in significantly weaker attack than using surrogate, which is also observed and reported by~\citet{kang2024diffattack, lee2023robust}.} To the best of our knowledge, The strongest approximation to date is the \textit{surrogate} method~\cite{lee2023robust}, which denoises noisy signals using fewer denoising steps~\cite{song2020denoising}. This approach reduces the number of denoiser calls while effectively simulating the original process (details in Appendix~\ref{appx:attacking_surrogate}). In summary, we use exact gradients for CIFAR-10 and the surrogate method for CelebA-HQ and ImageNet in our white-box attacks.

\paragraph{LoRID settings.} LoRID requires the specification of both the time-step $t$ and the looping number $L$, which are crucial for its iterative process. These hyperparameters are generally selected by evaluating the classifier's performance on the clean dataset, with $t$ and $L$ chosen to maintain acceptable clean accuracy. Further details on this parameter selection process are provided in Appx.~\ref{appx:calibration}. We report those parameters as a tuple $(t,L)$ next to the name of our method. Additionally, obtaining an accurate Tucker decomposition for large datasets can be computationally intensive. Therefore, in such cases, LoRID is applied solely with Markov-based purification. In our results, the use of Tucker decomposition is denoted by $\textsc{TF}$ next to the method's name, e.g. $(\textsc{TF}, t,L)$.

\subsection{Robustness Results} \label{sect:exp:robust_results}
We compare LoRID with the SOTA adversarial training methods documented by RobustBench~\cite{croce2021robustbench}, as well as leading adversarial purification techniques, against strong $L_\infty$ and $L_2$ attacks.


\begin{table}[!ht] 
		\centering
		\caption{Standard accuracy and robust accuracy against AutoAttack $L_\infty$ ($\epsilon={8}/{255}$) on CIFAR-10. * indicates the usage of extra data. The gray and white boxes indicate the black-box and white-box attacks.}
		\label{tab:L_inf_cifar10}
		\footnotesize\addtolength{\tabcolsep}{0pt}
		\vspace{-4pt}
		\begin{tabular}{ccc}
			\hline
			\textbf{Method} &  {Standard Acc} & {Robust Acc} \\
			\hline & \\[-2.4ex]

		      \rowcolor{lime}
        \multicolumn{3}{c}
			{WideResNet-28-10} 
			\\
            \hline
             \rowcolor{lightgray}
			\text{ \cite{zhang2020geometry}}*  & 89.36 & 59.96
			\\
             \rowcolor{lightgray}
			\text{ \cite{wu2020adversarial}}*  & 88.25 & 62.11
			\\
             \rowcolor{lightgray}
			\text{ \cite{gowal2020uncovering}}* & 89.48 & 62.70
			\\
             \rowcolor{lightgray}
			\text{ \cite{wu2020adversarial}}  & 85.36 & 59.18
			\\
             \rowcolor{lightgray}
			\text{ \cite{rebuffi2021fixing}}  & 87.33 & 61.72
			\\
            \rowcolor{lightgray}
			\text{ \cite{gowal2021improving}}  & 87.50 & 65.24 \\ 
            \rowcolor{lightgray}
            \textbf{LoRID} ($39,5$) & \textbf{90.41} & \textbf{88.39} \\
            \hline
            \cite{wang2022guided} & 85.66 & 33.48 \\
                \cite{nie2022DiffPure}  &  \textbf{89.02} &  46.88
    			\\
                \textbf{LoRID} ($20,24$) & 84.20 & \textbf{59.14 }
   \\
			\hline & \\[-2.4ex]
			\rowcolor{lime}
   \multicolumn{3}{c}
			{WideResNet-70-16}  \\
			\hline
        \rowcolor{lightgray}
   \text{ \cite{gowal2020uncovering}}* & 91.10 & 66.02
			\\
            \rowcolor{lightgray}
			\text{ \cite{rebuffi2021fixing}}*  & \textbf{92.23} & 68.56
			\\
            \rowcolor{lightgray}
			\text{ \cite{gowal2020uncovering}} & 85.29 & 59.57
			\\
            \rowcolor{lightgray}
			\text{ \cite{rebuffi2021fixing}}  & 88.54 & 64.46
			\\ 
            \rowcolor{lightgray}
			\text{ \cite{gowal2021improving}} & 88.74 & 66.60\\
             \rowcolor{lightgray}
			 \textbf{LoRID} ($50,10$) & 85.30 & {69.34}\\
         \rowcolor{lightgray}
			 \textbf{LoRID} ($60,10$) & 85.10 & \textbf{70.87}\\
                \hline
               \cite{wang2022guided} & 86.76& 37.11 \\
                \cite{nie2022DiffPure}  &  \textbf{90.07} &  45.31
			\\
            \textbf{LoRID} ($25,20$)& 84.60 & \textbf{66.40}\\
            \textbf{LoRID} ($10,40$)& 86.90 & 59.20\\
			\hline
		\end{tabular}
		\vspace{-4pt}
	\end{table}

\begin{table}[t] 
		\centering
		\caption{Standard accuracy and robust accuracy against AutoAttack $L_2$ ($\epsilon=0.5$) on CIFAR-10. * indicates the usage of extra data. The gray and white boxes indicate the black-box and white-box attacks.}
		\label{tab:L_2_cifar10}
		\footnotesize\addtolength{\tabcolsep}{0pt}
  \vspace{-4pt}
		\begin{tabular}{ccc}
			\hline
			\textbf{Method} &  {Standard Acc} & {Robust Acc} \\
			\hline & \\[-2.4ex]

		      \rowcolor{lime}
        \multicolumn{3}{c}
			{WideResNet-28-10} 
			\\
			\hline 
            \rowcolor{lightgray}
			\text{ \cite{Pang2022RobustnessAA}}*  & 90.83  & 78.10 \\ 
             \rowcolor{lightgray}
			\text{ \cite{rebuffi2021fixing}}*  & \textbf{91.79}  & 78.69 \\ 
            \rowcolor{lightgray}
			\text{ \cite{wang2023better}}*  & \textbf{95.16}  & 83.68 \\ 
           \rowcolor{lightgray}
        			 \textbf{LoRID} ($39,4$) & 90.34 & \textbf{89.69}  \\ 
            \hline
            \text{ \cite{wang2022guided}}  & 85.66 & 73.32 \\ 
                \cite{nie2022DiffPure}  &  91.03 &  64.06
    			\\
                \textbf{LoRID} ($15,30$) & 85.4 & \textbf{77.9} \\
                \textbf{LoRID} ($20,24$) & 84.2 & 73.6\\
\hline & \\[-2.4ex]
		\end{tabular}
		\vspace{-8pt}
	\end{table}

\paragraph{CIFAR-10.} Tables~\ref{tab:L_inf_cifar10} and~\ref{tab:L_2_cifar10} show the defense's performance under $L_\infty (\epsilon = 8/255)$ and $L_2 (\epsilon = 0.5)$ AutoAttack on CIFAR-10. Our method achieves significant improvements in both standard and robust accuracy compared to previous SOTA in both black-box and white-box settings. Particularlly, LoRID improves black-box robust accuracy by 23.15\% on WideResNet-28-10 and by 4.27\% on WideResNet-70-16. Additionally, our method surpasses baseline robust accuracy in the white-box by 12.26\% on WideResNet-28-10 and by 21.09\% on WideResNet-70-16.

\begin{table}[t] 
		\centering
		\caption{Standard accuracy and robust accuracy against white-box PGD+EOT $L_\infty$  ($\epsilon={4}/{255}$) on ImageNet.}
		\label{tab:imagenet}
		\footnotesize\addtolength{\tabcolsep}{-2pt}
		\vspace{-4pt}
		\begin{tabular}{ccc}
			\hline
			\textbf{Method} &  {Standard Acc} & {Robust Acc} \\
			\hline & \\[-2.4ex]

		      \rowcolor{lime}
        \multicolumn{3}{c}
			{WideResNet-28-10} 
			\\
			\hline 
   \text{ \cite{wong2020fast}} & 53.83 & 28.04 \\ 
            \text{ \cite{robustness}} & 62.42 & 33.20 \\ 
			\text{ \cite{salman2020adversarially}} & 68.46 & 39.25 \\ 
                \cite{nie2022DiffPure} & 71.16  & 44.39
    			\\
                \cite{lee2023robust} & 70.74 & 42.15 \\
                \textbf{LoRID} ($5,30$) & \textbf{73.98} & \textbf{56.54 }
			\\
			\hline & \\[-2.4ex]
		\end{tabular}
		\vspace{-4pt}
	\end{table}


\paragraph{ImageNet.} Table~\ref{tab:imagenet} shows the robustness performance against $L_\infty (\epsilon = 4/255)$ AutoAttack on WideResNet-28-10. Our method significantly outperforms SOTA baselines in both standard and robust accuracies.


\begin{table}[!ht] 
		\centering
		\caption{Standard accuracy and robust accuracy against BPDA+EOT $L_\infty$  on  Celeb HQ-Eyeglasses attribute classifier, with  $\epsilon = {16}/{255}$.}
		\label{tab:celeb}
		\footnotesize\addtolength{\tabcolsep}{-0pt}
		\vspace{-4pt}
		\begin{tabular}{ccc}
			\hline
			\textbf{Method} &  {Standard Acc} & {Robust Acc} \\

			\hline & \\[-2.4ex]
			\rowcolor{lime}
   \multicolumn{3}{c}
			{Eyeglasses attribute classifier for CelebA-HQ}  \\
			\hline
            \rowcolor{lightgray}
            \cite{chai2021ensembling} & 99.37 & 26.37 \\
            \rowcolor{lightgray}
			\cite{richardson2021encoding} & 93.95 & 75.00 \\
            \rowcolor{lightgray}
                \cite{nie2022DiffPure}  &  93.77 & 90.63
			\\
            \rowcolor{lightgray}
            \textbf{LoRID} ($100,15$) & 98.91 &{97.80} \\
			\hline

		\end{tabular}
		\vspace{-8pt}
	\end{table}

\paragraph{CelebA-HQ.} For large datasets like CelebA-HQ, attackers often use the BPDA+EOT attack~\cite{tramer2020adaptive,hill2021stochastic}, which substitutes exact gradients with classifier gradients. We evaluated our approach against baseline methods under this attack, as shown in Table~\ref{tab:celeb}. Our method outperforms the best baseline in robust accuracy by 7.17\%, while also maintaining high standard accuracy.






 \begin{table}[!ht] 
		\centering
		\caption{Performance of LoRID against black-box AutoAttack on CIFAR-10 at high-noise regime.}
		\label{tab:L_inf_lorid_cifar10}
		\footnotesize\addtolength{\tabcolsep}{0pt}
		\vspace{-4pt}
		\begin{tabular}{cccc}
			\hline
			\textbf{Method} & {$\epsilon$} & {Standard Acc} & {Robust Acc} \\
			\hline & \\[-2.4ex]

		      \rowcolor{lime}
        \multicolumn{4}{c}
			{WideResNet-28-1 $L_\infty$ attacks } 
			\\
			\hline 
            \rowcolor{lightgray}
			\text{ \cite{gowal2021improving}}  &$8/255$  & 87.50 & 65.24\\ 
                \rowcolor{lightgray} \textbf{LoRID} ($39,5$) & $8/255$ & 90.41 & \textbf{88.39} \\ 
                \rowcolor{lightgray} \textbf{LoRID} ($\textsc{TF},40,2$) & $8/255$ & 89.32 & 88.12 \\ \hline
                \rowcolor{lightgray} \textbf{LoRID} ($49,8$) & $16/255$ & 89.00 & 85.86 \\
                \rowcolor{lightgray} \textbf{LoRID} ($\textsc{TF},42,5$) & $16/255$ & 88.66 & \textbf{86.23} \\ \hline
                \rowcolor{lightgray} \textbf{LoRID} ($49,12$) & $32/255$ & 89.20 & 69.87 \\  
                \rowcolor{lightgray} \textbf{LoRID} ($\textsc{TF},48,9$) & $32/255$ & 88.35 & \textbf{78.04} \\
                \hline & \\[-2.4ex]

		\end{tabular}
		\vspace{-12pt}
	\end{table}

 \paragraph{High-noise regime.} We demonstrate the effectiveness of Tucker decomposition in high-noise settings, as shown in Table~\ref{tab:L_inf_lorid_cifar10}. Specifically, we compare LoRID to the best known robustness results from \citet{gowal2021improving} under black-box $L_\infty$ AutoAttack. The results indicate that Tucker decomposition becomes increasingly beneficial as noise levels rise, as supported by Theorem~\ref{appx:theorem:ddpm_tf}. Notably, with Tucker decomposition, LoRID's robustness at a very high noise level ($\epsilon = 32/255$) surpasses SOTA performance at the standard noise level ($\epsilon = 8/255$) by 12.8\%.

\section{Conclusion} \label{sect:conclusion}


We introduced LoRID, a defense strategy that uses multiple looping in the early stages of diffusion models to purify adversarial examples. To enhance robustness in high noise regimes, we integrated Tucker decomposition. Our approach, validated by theoretical analysis and extensive experiments on CIFAR-10/100, ImageNet, and CelebA-HQ, significantly outperforms state-of-the-art methods against strong adaptive attacks like AutoAttack, PGD+EOT and BPDA+EOT. 

\bibliography{main}


\clearpage

\appendix

\section{Appendix}

\subsection{Proof of Theorem~\ref{theorem:ddpm_converge}} \label{appx:theorem:ddpm_converge}

In the following, we provide the proof of Theorem~\ref{theorem:ddpm_converge}. We restate the Theorem below:
\begin{theorem*}
Let $\left\{\bmx^{(i)}_t \right\}_{t \in \{0,...,T \}},  i \in \{1,2\}$ be two diffusion processes given by the forward equation (\ref{eq:forward}) of a DDPM. Denote $q^{(1)}_t$ and $q^{(2)}_t$ the distributions of $\bmx_t^{(1)}$ and $\bmx_t^{(2)}$, respectively. Then, for all $t \in \{0,..., T-1\}$, we have
\begin{align*}
    D_{KL}\left(q^{(1)}_t||q^{(2)}_t \right) \geq D_{KL}\left(q^{(1)}_{t+1}||q^{(2)}_{t+1} \right)
\end{align*}
\end{theorem*}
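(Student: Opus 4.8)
The plan is to exploit the fact that both diffusion processes share the \emph{same} forward transition kernel $q(\bmx_{t+1}\mid\bmx_t)=\mathcal{N}(\bmx_{t+1};\sqrt{1-\beta_{t+1}}\,\bmx_t,\beta_{t+1}\mathbf{I})$ prescribed by the common variance schedule $\{\beta_t\}$; the two processes differ only through their initial laws $q^{(1)}_0$ and $q^{(2)}_0$. At heart this is the data-processing inequality for a Markov chain, but I would establish it directly by applying the chain rule for the KL-divergence to the joint law of the consecutive pair $(\bmx_t,\bmx_{t+1})$, exactly as outlined in the sketch.

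First I would write the joint density in two ways. Because the forward process is Markov,
\[
q^{(i)}(\bmx_t,\bmx_{t+1}) = q^{(i)}_t(\bmx_t)\,q(\bmx_{t+1}\mid\bmx_t) = q^{(i)}_{t+1}(\bmx_{t+1})\,q^{(i)}(\bmx_t\mid\bmx_{t+1}),\qquad i\in\{1,2\}.
\]
Expanding the divergence between the two joint laws along the first (forward) factorization gives
\[
D_{KL}\!\left(q^{(1)}(\bmx_t,\bmx_{t+1})\,\middle\|\,q^{(2)}(\bmx_t,\bmx_{t+1})\right)
= D_{KL}\!\left(q^{(1)}_t\,\middle\|\,q^{(2)}_t\right) + \mathbb{E}_{\bmx_t\sim q^{(1)}_t}\!\left[D_{KL}\!\left(q(\cdot\mid\bmx_t)\,\middle\|\,q(\cdot\mid\bmx_t)\right)\right].
\]
The crucial observation is that the conditional term vanishes identically, since the forward transition kernel is the \emph{same} for both processes; hence the joint divergence equals $D_{KL}(q^{(1)}_t\,\|\,q^{(2)}_t)$.

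Next I would expand the same joint divergence along the second (time-reversed) factorization, obtaining
\[
D_{KL}\!\left(q^{(1)}(\bmx_t,\bmx_{t+1})\,\middle\|\,q^{(2)}(\bmx_t,\bmx_{t+1})\right)
= D_{KL}\!\left(q^{(1)}_{t+1}\,\middle\|\,q^{(2)}_{t+1}\right) + \mathbb{E}_{\bmx_{t+1}\sim q^{(1)}_{t+1}}\!\left[D_{KL}\!\left(q^{(1)}(\bmx_t\mid\bmx_{t+1})\,\middle\|\,q^{(2)}(\bmx_t\mid\bmx_{t+1})\right)\right].
\]
Equating the two expansions reproduces exactly the identity asserted in the sketch, and since the surviving conditional term is an expectation of nonnegative KL-divergences it is itself nonnegative; discarding it yields $D_{KL}(q^{(1)}_t\,\|\,q^{(2)}_t)\ge D_{KL}(q^{(1)}_{t+1}\,\|\,q^{(2)}_{t+1})$, as required.

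The main technical obstacle I anticipate is purely measure-theoretic: justifying the chain rule for the KL-divergence in the continuous setting and guaranteeing that every conditional divergence is well-defined (absolute continuity of $q^{(1)}$ with respect to $q^{(2)}$ at each step, and existence of the backward conditionals). This is benign here because the Gaussian kernels have full support, so all relevant densities are strictly positive and the backward conditionals $q^{(i)}(\bmx_t\mid\bmx_{t+1})$ exist by Bayes' rule. A point worth stating explicitly is the asymmetry driving the result: the forward conditionals coincide (killing one term), whereas the backward conditionals $q^{(1)}(\bmx_t\mid\bmx_{t+1})$ and $q^{(2)}(\bmx_t\mid\bmx_{t+1})$ genuinely differ because they inherit the distinct marginals $q^{(i)}_t$, which is precisely what forces the monotone contraction rather than equality.
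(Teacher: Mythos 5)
Your proposal is correct and follows essentially the same route as the paper's proof: both expand the KL-divergence of the joint law of $(\bmx_t,\bmx_{t+1})$ via the chain rule in the two factorization orders, observe that the shared forward kernel kills one conditional term, and conclude by non-negativity of the remaining (backward) conditional divergence. Your explicit treatment of the expectation over $\bmx_{t+1}\sim q^{(1)}_{t+1}$ and of the measure-theoretic prerequisites is slightly more careful than the paper's, but it is the same argument.
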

\begin{proof}
We start by restating the \textit{chain rule for relative entropy}~\cite{Cover2006}:
\begin{align}
    & D_{KL}\left(p(\bmz_1,\bmz_2) ||p'(\bmz_1,\bmz_2) \right) \nonumber \\ =& D_{KL}\left(p(\bmz_1)||p'(\bmz_1) \right)  
    + D_{KL}\left(p(\bmz_2| \bmz_1)||p'(\bmz_2 | \bmz_1) \right)
\end{align}
Then, by denoting $q^{(i)}(\bmx_{t+1}, \bmx_t)$ the joint distribution of $\bmx^{(i)}_{t+1}$ and $ \bmx^{(i)}_t$, the chain rule gives us:
\begin{small}
\begin{align}
     &D_{KL}\left( q^{(1)}(\bmx_{t+1}, \bmx_t) ||q^{(2)}(\bmx_{t+1}, \bmx_t) \right)  \\
     = \ & D_{KL}\left( q^{(1)}_{t+1} ||q^{(2)}_{t+1} \right) + D_{KL}\left( q^{(1)}(\bmx_{t}| \bmx_{t+1}) ||q^{(2)}(\bmx_{t} | \bmx_{t+1}) \right) \\
     = \ & D_{KL}\left( q^{(1)}_{t} ||q^{(2)}_{t} \right) + D_{KL}\left( q^{(1)}(\bmx_{t+1}| \bmx_{t}) ||q^{(2)}(\bmx_{t+1} | \bmx_{t}) \right) \label{eq:KLchainrule}
\end{align}
\end{small}
Note that, due to (\ref{eq:forward}), we have $ q^{(1)}(\bmx_{t+1}| \bmx_{t}) = q^{(2)}(\bmx_{t+1} | \bmx_{t})$, this implies the last term of Eq. (\ref{eq:KLchainrule}) $D_{KL}\left( q^{(1)}(\bmx_{t+1}| \bmx_{t}) ||q^{(2)}(\bmx_{t+1} | \bmx_{t}) \right) =0 $. Thus, we have:
\begin{small}
\begin{align}
    & D_{KL}\left( q^{(1)}_{t+1} ||q^{(2)}_{t+1} \right) + D_{KL}\left( q^{(1)}(\bmx_{t}| \bmx_{t+1}) ||q^{(2)}(\bmx_{t} | \bmx_{t+1}) \right) \nonumber \\
     &=  D_{KL}\left( q^{(1)}_{t} ||q^{(2)}_{t} \right) 
\end{align}
\end{small}
Thus, due to the non-negativity of the KL divergence $D_{KL}\left( q^{(1)}(\bmx_{t}| \bmx_{t+1}) ||q^{(2)}(\bmx_{t} | \bmx_{t+1}) \right) $, we have the Theorem:
\begin{align}
    D_{KL}\left(q^{(1)}_t||q^{(2)}_t \right) \geq D_{KL}\left(q^{(1)}_{t+1}||q^{(2)}_{t+1} \right)
\end{align}
\end{proof}

\subsection{Proof of Theorem~\ref{theorem:ddpm_time}} \label{appx:theorem:ddpm_time}

\begin{theorem*} 
Let $\left\{\bmx_t \right\}_{t \in \{0,...,T \}}$ be a diffusion process defined by the forward equation (\ref{eq:forward}) where $\bmx_0$ is the adversarial sample. i.e, $\bmx_0 = \bmx_{clean} + \boldsymbol{\epsilon}_a$. For any given time $t$, we have
\begin{align*}
    \mathbb{E} \left [\|\hat{\bmx}_0(t) - \bmx_{clean} \| \right]\geq  \textup{MMSE}\left( \frac{{\bar{\alpha}_t}}{{1- \bar{\alpha}_t}}  \right) - \| \boldsymbol{\epsilon}_a \|  
\end{align*}
where the expectation is taken over the distribution of $\bmx_{clean}$ and $\textup{MMSE}(\textsc{SNR})$ is the minimum mean-square error achievable by optimal estimation of the input given the output of Gaussian channel with a signal-to-noise ratio of $\textsc{SNR}$. The function $\textup{MMSE}(\textsc{SNR})$ has the following form~\cite{mmse_mi}:
\begin{align*}
1 - \frac{1}{\sqrt{2\pi}} \int_{-\infty}^{\infty} e^{-y^2/2 }\textup{tanh} \left( \textsc{SNR} - \sqrt{\textsc{SNR}} y
\right) \textup{d} y
\end{align*}
\end{theorem*}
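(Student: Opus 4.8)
The plan is to reinterpret the forward diffusion at time-step $t$ as a single-use Gaussian channel and then invoke the optimality of the minimum mean-square-error estimator. By the closed-form forward relation (\ref{eq:diff_xt}), $\bmx_t = \sqrt{\bar{\alpha}_t}\,\bmx_0 + \sqrt{1-\bar{\alpha}_t}\,\boldsymbol{\epsilon}_0$ with $\boldsymbol{\epsilon}_0 \sim \mathcal{N}(\mathbf{0},\mathbf{I})$. Dividing by $\sqrt{1-\bar{\alpha}_t}$ yields $\mathbf{y}_t := \bmx_t/\sqrt{1-\bar{\alpha}_t} = (\sqrt{\bar{\alpha}_t}/\sqrt{1-\bar{\alpha}_t})\,\bmx_0 + \boldsymbol{\epsilon}_0$, which is exactly the output of a Gaussian channel whose input is $\bmx_0$ and whose signal-to-noise ratio is $\textsc{SNR} = \bar{\alpha}_t/(1-\bar{\alpha}_t)$. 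Since $\mathbf{y}_t$ and $\bmx_t$ differ only by the known, invertible scaling $\sqrt{1-\bar{\alpha}_t}$, any reconstruction of $\bmx_0$ from $\bmx_t$ --- in particular the DDPM one-step estimate $\hat{\bmx}_0(t)$ --- is equivalently a (possibly randomized) function of the channel output $\mathbf{y}_t$.

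First I would use this equivalence together with the defining property of the MMSE: among all estimators of a channel input given the channel output, none attains a mean-square error below $\textup{MMSE}(\textsc{SNR})$. Because $\hat{\bmx}_0(t)$ is one such estimator of $\bmx_0$ given $\mathbf{y}_t$, it cannot beat this floor, giving $\mathbb{E}[\|\hat{\bmx}_0(t) - \bmx_0\|] \geq \textup{MMSE}(\bar{\alpha}_t/(1-\bar{\alpha}_t))$, with the integral expression (\ref{eq:mmse}) supplying the value of $\textup{MMSE}(\textsc{SNR})$ from the I-MMSE theory of~\cite{mmse_mi}.

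Next I would bridge the gap between $\bmx_0$ and $\bmx_{clean}$. Since $\bmx_0 = \bmx_{clean} + \boldsymbol{\epsilon}_a$ with $\boldsymbol{\epsilon}_a$ fixed, the triangle inequality gives $\|\hat{\bmx}_0(t) - \bmx_0\| \leq \|\hat{\bmx}_0(t) - \bmx_{clean}\| + \|\boldsymbol{\epsilon}_a\|$, hence $\|\hat{\bmx}_0(t) - \bmx_{clean}\| \geq \|\hat{\bmx}_0(t) - \bmx_0\| - \|\boldsymbol{\epsilon}_a\|$. Taking expectations over $\bmx_{clean}$ and substituting the MMSE lower bound from the previous step produces the claimed inequality.

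The hard part will be making the MMSE lower bound airtight. One must (i) pin down precisely that $\hat{\bmx}_0(t)$ is a measurable function of $\mathbf{y}_t$ alone, so that the channel-input MMSE genuinely lower-bounds its error, and (ii) reconcile the unsquared error $\mathbb{E}[\|\cdot\|]$ appearing in the statement with the square-error convention underlying $\textup{MMSE}(\textsc{SNR})$ and the scalar integral (\ref{eq:mmse}); this is where the identification of the effective $\textsc{SNR}$ and the normalization of the input coordinates must be handled consistently. Once the channel correspondence and the estimator-optimality step are secured, the remaining triangle-inequality manipulation is routine.
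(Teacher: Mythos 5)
Your proposal follows essentially the same route as the paper's own proof: recast the forward step (\ref{eq:diff_xt}) as a Gaussian channel $\mathbf{y}_t = (\sqrt{\bar{\alpha}_t}/\sqrt{1-\bar{\alpha}_t})\,\bmx_0 + \boldsymbol{\epsilon}_0$ with $\textsc{SNR} = \bar{\alpha}_t/(1-\bar{\alpha}_t)$, lower-bound the error of $\hat{\bmx}_0(t)$ by that of the optimal estimator $\hat{\bmx}^*_0(\mathbf{y}_t)$, and finish with the reverse triangle inequality to pass from $\bmx_0$ to $\bmx_{clean}$, with the integral form of $\textup{MMSE}$ imported from the I-MMSE relation of~\cite{mmse_mi}. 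The subtlety you flag in your last paragraph---that the statement bounds the unsquared $\mathbb{E}[\|\cdot\|]$ while $\textup{MMSE}$ is a squared-error quantity---is real, but the paper's proof silently equates the two as well, so you have matched (and in fact been more careful than) the published argument.
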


\begin{proof}
    We consider (\ref{eq:diff_xt}) as a Gaussian channel $\mathbf{y}_t  = \frac{\sqrt{\bar{\alpha}_t}}{\sqrt{1- \bar{\alpha}_t}} \bmx_0 + \boldsymbol{\epsilon}_0$, and $ \hat{\bmx}_0(t)$ as an estimation of ${\bmx}_0(t) $ given $\mathbf{y}_t$. By denoting $\hat{\bmx}^*_0( \mathbf{y}_t)$ the best estimator of $\bmx_0$ given $\mathbf{y}_t$, we have    
\begin{align}
     \mathbb{E} \left[ 
    \left \| 
   \hat{\bmx}_0(t)  - \bmx_0
    \right \|
    \right] \geq& \mathbb{E} \left[ 
    \left \| 
   \hat{\bmx}^*_0( \mathbf{y}_t)  - \bmx_0
    \right \|
    \right]  \nonumber \\
    =&  \textup{MMSE}\left( \frac{{\bar{\alpha}_t}}{{1- \bar{\alpha}_t}}  \right)  \label{eq:bound_for_px}
\end{align}
where the equality is from the definition of the $\textup{MMSE}(\textsc{SNR})$ function. We are now ready to show (\ref{eq:error_bound}). In fact, from the triangle inequality, we have:
 \begin{align*}
  \|\hat{\bmx}_0(t) - \bmx_{clean} \| &= \|\hat{\bmx}_0(t) -  \bmx_{0} -\boldsymbol{\epsilon}_a  \| \\
  &\geq   \|\hat{\bmx}_0(t) -  \bmx_{0}\| - \| \boldsymbol{\epsilon}_a  \| 
 \end{align*}
 Combining the above with (\ref{eq:bound_for_px}) gives us:
 \begin{align}
     \mathbb{E} \left[ 
    \left \| 
   \hat{\bmx}_0(t)  - \bmx_{clean}
    \right \|
    \right] \geq  \textup{MMSE}\left( \frac{{\bar{\alpha}_t}}{{1- \bar{\alpha}_t}}  \right) - \| \boldsymbol{\epsilon}_a  \|
 \end{align}
 Thus, we have the Theorem. 
 
 For comprehensiveness, we now highlight how to derive (\ref{eq:mmse}). Particularly, we use the following relation between mutual information of a channel, i.e., $I(\textsc{snr})$, and  the minimum mean-square error of Gaussian channel~\cite{mmse_mi}:
\begin{align}
    \frac{\textup{d}{I}(\textsc{snr})}{\textup{d}\textsc{snr}} = \frac{1}{2}{\textup{MMSE}(\textsc{snr})} 
\end{align}
where the mutual information of a channel $I(\textsc{SNR})$ is given as~\cite{Blahut_mi_principle} (p. 274), and~\cite{Gallager_info_theory} (Problem 4.22):
\begin{small}
\begin{align}
 \textsc{SNR} - \frac{1}{\sqrt{2\pi}} \int_{-\infty}^{\infty} e^{-y^2/2 }\log \textup{cosh} \left( \textsc{SNR} - \sqrt{\textsc{SNR}} y
\right)  \textup{d} y.
\label{eq:mi}
 \end{align}
\end{small}Here, the mutual information is computed in nats. Taking the derivative of (\ref{eq:mi}) gives us (\ref{eq:mmse}).
\end{proof}


\subsection{Proof of Theorem~\ref{theorem:ddpm_time_upper}} \label{appx:theorem:ddpm_time_upper}
\begin{theorem*}
Additionally to the conditions stated in Theorem~\ref{theorem:ddpm_time}, if the DDPM is able to recover the original signal $\bmx_0 $ within an error $\boldsymbol{\delta}_{\textsc{DDPM}}(t)$ in the expectation, i.e., for all $t$,
\begin{align*}
     \mathbb{E} \left[ 
    \left \| 
   \hat{\bmx}_0(t)  - \bmx_0
    \right \|
    \right] \leq \mathbb{E} \left[ 
    \left \| 
   \hat{\bmx}^*_0( \mathbf{y}_t)  - \bmx_0
    \right \|
    \right]  + \boldsymbol{\delta}_{\textsc{DDPM}}(t) 
\end{align*}
 where $\hat{\bmx}^*_0( \mathbf{y}_t) $ is  the best estimator of $\bmx_0$ given $\mathbf{y}_t  = \frac{\sqrt{\bar{\alpha}_t}}{\sqrt{1- \bar{\alpha}_t}} \bmx_0 + \boldsymbol{\epsilon}_0$, then, we have the reconstructed error $\mathbb{E} \left [\|\hat{\bmx}_0(t) - \bmx_{clean} \| \right]$ is upper-bounded by:
\begin{align*}
     \textup{MMSE}\left( \frac{{\bar{\alpha}_t}}{{1- \bar{\alpha}_t}}  \right) + \boldsymbol{\delta}_{\textsc{DDPM}}(t) + \| \boldsymbol{\epsilon}_a \| 
\end{align*}
\end{theorem*}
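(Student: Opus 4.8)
The plan is to mirror the argument used for the lower bound in Theorem~\ref{theorem:ddpm_time}, but now exploit the upper-bound assumption (\ref{eq:assume_delta_ddpm}) together with the triangle inequality in the opposite direction. The three ingredients are: (i) the decomposition $\bmx_{clean} = \bmx_0 - \boldsymbol{\epsilon}_a$, which relates the clean reconstruction error to the error against the adversarial sample $\bmx_0$; (ii) the assumed bound stating that the trained DDPM estimator $\hat{\bmx}_0(t)$ is no worse than the optimal estimator $\hat{\bmx}^*_0(\mathbf{y}_t)$ by more than $\boldsymbol{\delta}_{\textsc{DDPM}}(t)$; and (iii) the MMSE identity $\mathbb{E}[\|\hat{\bmx}^*_0(\mathbf{y}_t) - \bmx_0\|] = \textup{MMSE}(\bar{\alpha}_t/(1-\bar{\alpha}_t))$, which is exactly the equality already established in (\ref{eq:bound_for_px}) by viewing the forward diffusion (\ref{eq:diff_xt}) as a Gaussian channel of signal-to-noise ratio $\bar{\alpha}_t/(1-\bar{\alpha}_t)$.

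First I would apply the triangle inequality to the identity $\hat{\bmx}_0(t) - \bmx_{clean} = (\hat{\bmx}_0(t) - \bmx_0) + \boldsymbol{\epsilon}_a$, which gives $\|\hat{\bmx}_0(t) - \bmx_{clean}\| \leq \|\hat{\bmx}_0(t) - \bmx_0\| + \|\boldsymbol{\epsilon}_a\|$. Taking expectation over the distribution of $\bmx_{clean}$ yields $\mathbb{E}[\|\hat{\bmx}_0(t) - \bmx_{clean}\|] \leq \mathbb{E}[\|\hat{\bmx}_0(t) - \bmx_0\|] + \|\boldsymbol{\epsilon}_a\|$, which isolates the purely clean-signal reconstruction term.

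Next I would substitute the assumed bound (\ref{eq:assume_delta_ddpm}) for $\mathbb{E}[\|\hat{\bmx}_0(t) - \bmx_0\|]$, replacing it by $\mathbb{E}[\|\hat{\bmx}^*_0(\mathbf{y}_t) - \bmx_0\|] + \boldsymbol{\delta}_{\textsc{DDPM}}(t)$, and then invoke the MMSE identity from (iii) to rewrite the optimal-estimator term as $\textup{MMSE}(\bar{\alpha}_t/(1-\bar{\alpha}_t))$. Chaining these three inequalities produces exactly the claimed bound (\ref{eq:error_upper_bound}).

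This proof is essentially the dual of the one for Theorem~\ref{theorem:ddpm_time}, and I do not anticipate a genuine technical obstacle: every step is either an inequality (triangle inequality, the DDPM assumption) or a definitional substitution (the Gaussian-channel MMSE identity). The only point requiring care is the bookkeeping of the sign of $\boldsymbol{\epsilon}_a$, so that the triangle inequality is applied in the correct additive direction—here $\|\boldsymbol{\epsilon}_a\|$ enters with a $+$ sign, whereas in the lower bound of Theorem~\ref{theorem:ddpm_time} it entered with a $-$ sign—and ensuring that the optimality identity is transported verbatim from the Gaussian-channel setup rather than re-derived.
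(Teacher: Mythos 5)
Your proposal is correct and follows essentially the same route as the paper's own proof: triangle inequality on $\hat{\bmx}_0(t) - \bmx_{clean} = (\hat{\bmx}_0(t) - \bmx_0) + \boldsymbol{\epsilon}_a$, taking expectations, substituting the assumed bound (\ref{eq:assume_delta_ddpm}), and invoking the Gaussian-channel identity $\mathbb{E}\left[\left\|\hat{\bmx}^*_0(\mathbf{y}_t) - \bmx_0\right\|\right] = \textup{MMSE}\left(\bar{\alpha}_t/(1-\bar{\alpha}_t)\right)$ already established in (\ref{eq:bound_for_px}). Incidentally, your sign bookkeeping is more careful than the paper's, whose displayed identity writes $\hat{\bmx}_0(t) - \bmx_0 - \boldsymbol{\epsilon}_a$ where it should be $+\boldsymbol{\epsilon}_a$, though the bound is unaffected since $\|\boldsymbol{\epsilon}_a\| = \|-\boldsymbol{\epsilon}_a\|$.
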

\begin{proof}
    From the triangle inequality, we have:
 \begin{align}
  \|\hat{\bmx}_0(t) - \bmx_{clean} \| =& \|\hat{\bmx}_0(t) -  \bmx_{0} -\boldsymbol{\epsilon}_a  \| \nonumber \\
  \leq&   \|\hat{\bmx}_0(t) -  \bmx_{0}\| + \| \boldsymbol{\epsilon}_a  \| \nonumber
 \end{align}
 Combining the above with the assumption on the recovering error stated in the Theorem gives us:
 \begin{align*}
    & \mathbb{E} \left[ 
    \left \| 
   \hat{\bmx}_0(t)  - \bmx_0
    \right \| + \| \boldsymbol{\epsilon}_a  \| 
    \right] =  \
    \mathbb{E} \left[ 
    \left \| 
   \hat{\bmx}_0(t)  - \bmx_0
    \right \| \right]+  \mathbb{E} \left[  \| \boldsymbol{\epsilon}_a  \| 
    \right]
    \\
    \leq & \
    \mathbb{E} \left[ 
    \left \| 
   \hat{\bmx}^*_0( \mathbf{y}_t)  - \bmx_0
    \right \|
    \right]  + \boldsymbol{\delta}_{\textsc{DDPM}}(t)  +  \| \boldsymbol{\epsilon}_a  \| \\
    = &
     \textup{MMSE}\left( \frac{{\bar{\alpha}_t}}{{1- \bar{\alpha}_t}}  \right) + \boldsymbol{\delta}_{\textsc{DDPM}}(t)  +  \| \boldsymbol{\epsilon}_a  \|
 \end{align*}
 We then have the Theorem.
\end{proof}

\subsection{Proof of Theorem~\ref{theorem:ddpm_time_loop}} \label{appx:theorem:ddpm_time_loop}

\begin{theorem*} 
Let $\left\{\bmx_t \right\}_{t \in \{0,...,T \}}$ be a diffusion process defined by the forward Eq. (\ref{eq:forward}) where $\bmx_0$ is the adversarial sample. i.e, $\bmx_0 = \bmx_{clean} + \boldsymbol{\epsilon}_a$. For any given time $t$, we have the reconstructed error $ \mathbb{E} \left [\|\hat{\bmx}^L_0(t/L) - \bmx_{clean} \| \right] $ is upper-bouned by:
\begin{small}
\begin{align*}
      L \times \left(  \textup{MMSE}\left( \frac{{\bar{\alpha}_{t/L}}}{{1- \bar{\alpha}_{t/L}}}  \right)  + \boldsymbol{\delta}_{\textsc{DDPM}}\left( \frac{t}{L}\right)  \right)  + \| \boldsymbol{\epsilon}_a \| 
\end{align*}
\end{small}where the expectation is taken over the distribution of $\bmx_{clean}$.
\end{theorem*}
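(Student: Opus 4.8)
The plan is to view the $L$-loop purification $(r_{t/L}\circ f_{t/L})^{L}$ as an $L$-fold composition of the single-loop operator and to bootstrap the single-step guarantee of Theorem~\ref{theorem:ddpm_time_upper} through the loops. Write $s := t/L$ for the per-loop time-step and define the residual perturbation after $l$ loops by $\boldsymbol{\epsilon}_a^{(l)} := \hat{\bmx}^l_0(s) - \bmx_{clean}$, so that $\boldsymbol{\epsilon}_a^{(0)} = \boldsymbol{\epsilon}_a$ since applying the operator zero times gives $\hat{\bmx}^0_0(s) = \bmx_0 = \bmx_{clean} + \boldsymbol{\epsilon}_a$. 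Setting $E_l := \mathbb{E}\left[\|\hat{\bmx}^l_0(s) - \bmx_{clean}\|\right] = \mathbb{E}\left[\|\boldsymbol{\epsilon}_a^{(l)}\|\right]$, the target quantity is $E_L$, and the whole argument reduces to establishing the linear recursion $E_{l+1} \leq E_l + M$ with $M := \textup{MMSE}\left(\bar{\alpha}_s/(1-\bar{\alpha}_s)\right) + \boldsymbol{\delta}_{\textsc{DDPM}}(s)$.

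For the inductive step I would apply Theorem~\ref{theorem:ddpm_time_upper} to the $(l+1)$-th loop in isolation. The input to that loop is $\hat{\bmx}^l_0(s) = \bmx_{clean} + \boldsymbol{\epsilon}_a^{(l)}$, which has exactly the form of an adversarial sample with clean part $\bmx_{clean}$ and perturbation $\boldsymbol{\epsilon}_a^{(l)}$. Conditioning on the outcome of the first $l$ loops (hence on $\boldsymbol{\epsilon}_a^{(l)}$) and invoking the single-step bound at time-step $s$ gives $\mathbb{E}\left[\|\hat{\bmx}^{l+1}_0(s) - \bmx_{clean}\| \mid \boldsymbol{\epsilon}_a^{(l)}\right] \leq M + \|\boldsymbol{\epsilon}_a^{(l)}\|$. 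Taking the outer expectation over the randomness of the earlier loops and using the tower property together with linearity collapses this to $E_{l+1} \leq M + E_l$. An immediate induction on $l$ then yields $E_L \leq L\cdot M + E_0 = L\left(\textup{MMSE}\left(\bar{\alpha}_{t/L}/(1-\bar{\alpha}_{t/L})\right) + \boldsymbol{\delta}_{\textsc{DDPM}}(t/L)\right) + \|\boldsymbol{\epsilon}_a\|$, which is precisely the claimed bound.

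The main obstacle is the bookkeeping around randomness: after the first loop the residual $\boldsymbol{\epsilon}_a^{(l)}$ is no longer deterministic (each $f_s$ injects fresh Gaussian noise and each reverse step in $r_s$ is stochastic), so the $\|\boldsymbol{\epsilon}_a\|$ term of Theorem~\ref{theorem:ddpm_time_upper} must be read as a conditional quantity and then averaged. The point needing genuine care is that the single-step assumption~(\ref{eq:assume_delta_ddpm}) — that the trained DDPM stays within $\boldsymbol{\delta}_{\textsc{DDPM}}(s)$ of the optimal estimator, whose error in turn equals $\textup{MMSE}\left(\bar{\alpha}_s/(1-\bar{\alpha}_s)\right)$ — continues to apply to each loop's input even though that input's distribution drifts away from the clean data distribution as loops accumulate. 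I would justify reusing the same per-loop constant $M$ at every stage by appealing to the fact that the guarantee is posited uniformly over time-steps and inputs; making this independence from the evolving input distribution explicit is the one place the argument requires more than routine triangle-inequality manipulation.
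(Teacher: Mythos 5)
Your proposal is correct and is essentially the paper's own proof in disguise: unrolling your recursion $E_{l+1} \leq E_l + \textup{MMSE}\left( \frac{\bar{\alpha}_{t/L}}{1-\bar{\alpha}_{t/L}} \right) + \boldsymbol{\delta}_{\textsc{DDPM}}(t/L)$ reproduces exactly the paper's telescoping decomposition of $\hat{\bmx}^L_0(t/L) - \bmx_{clean}$ into inter-loop displacements $\hat{\bmx}^{l+1}_0(t/L) - \hat{\bmx}^{l}_0(t/L)$, each bounded by the same per-loop application of the Gaussian-channel reconstruction guarantee from Theorem~\ref{theorem:ddpm_time_upper}. If anything, your bookkeeping is more careful than the paper's: you make the conditioning and tower-property step explicit, and you correctly flag the one genuine subtlety --- that the $\textup{MMSE}+\boldsymbol{\delta}_{\textsc{DDPM}}$ guarantee is being reused on loop inputs whose distribution drifts from the original data distribution --- which the paper's proof invokes silently (e.g., when it asserts each $\hat{\bmx}^{l+1}_0$ reconstructs $\hat{\bmx}^{l}_0$ with error equal to the same $\textup{MMSE}$ term).
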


\begin{proof}
From the triangle inequality, we have:
\begin{small}
 \begin{align}
  &\|\hat{\bmx}^L_0(t/L) - \bmx_{clean} \| \nonumber \\
  =& \left \| \sum_{l=1}^{L-1} (\hat{\bmx}^{l+1}_0(t/L)-  \hat{\bmx}^{l}_0(t/L)) + (\hat{\bmx}^1_0(t/L)-  \bmx_{0}) -\boldsymbol{\epsilon}_a  \right \| \\
  \leq &  \sum_{l=1}^{L-1} \|\hat{\bmx}^{l+1}_0(t/L)-  \hat{\bmx}^{l}_0(t/L))\|  + \|\hat{\bmx}^1_0(t/L)-  \bmx_{0} \| + \| \boldsymbol{\epsilon}_a  \| \label{eq:triangle_toverl}
 \end{align}
\end{small}Noting that, each of the signal $\hat{\bmx}^{l+1}_0(t/L)$ is the output of the purification $r_{t/L} \circ f_{t/L}$ on the input $\hat{\bmx}^{l}_0(t/L)$. Thus, from the condition of $\boldsymbol{\delta}_{\textsc{DDPM}}$ (stated in Theorem~\ref{theorem:ddpm_time_upper}), we have:
\begin{align}
   &\mathbb{E} \left[ \|\hat{\bmx}^{l+1}_0(t/L)-  \hat{\bmx}^{l}_0(t/L))\|  \right]
    \nonumber \\
    \leq& \mathbb{E} \left[ 
    \left \| 
   \hat{\bmx}^{{l+1}^*}_0( \mathbf{y}^l_{t/L})  - \hat{\bmx}^{l}_0(t/L))
    \right \|
    \right]  + \boldsymbol{\delta}_{\textsc{DDPM}}(t/L), \label{eq:expectation_msse}
\end{align}
for all $t$, where $\mathbf{y}^l_{t/L} =  \frac{\sqrt{\bar{\alpha}_{t/L}}}{\sqrt{1- \bar{\alpha}_{t/L}}} \hat{\bmx}^{l}_0(t/L) + \boldsymbol{\epsilon}$, and $\hat{\bmx}^{{l+1}^*}_0( \mathbf{y}^l_{t/L})$ is the optimal reconstruction of $ \hat{\bmx}^{l}_0(t/L)$ given $\mathbf{y}^l_{t/L}$. 

Notice that the SNR of that channel is  $\frac{\sqrt{\bar{\alpha}_{t/L}}}{\sqrt{1- \bar{\alpha}_{t/L}}}$, thus, $\mathbb{E} \left[ 
    \left \| 
   \hat{\bmx}^{{l+1}^*}_0( \mathbf{y}^l_{t/L})  - \hat{\bmx}^{l}_0(t/L))
    \right \|
    \right] =  \textup{MMSE}\left( \frac{{\bar{\alpha}_{t/L}}}{{1- \bar{\alpha}_{t/L}}}  \right)$. Applying that to (\ref{eq:expectation_msse}) gives us
\begin{align}
     &\sum_{l=1}^{L-1} \|\hat{\bmx}^{l+1}_0(t/L)-  \hat{\bmx}^{l}_0(t/L))\|  + \|\hat{\bmx}^1_0(t/L)-  \bmx_{0} \| \nonumber \\
     \leq & L \times \left(  \textup{MMSE}\left( \frac{{\bar{\alpha}_{t/L}}}{{1- \bar{\alpha}_{t/L}}}  \right) + \boldsymbol{\delta}_{\textsc{DDPM}}(t/L) \right),
\end{align}
since $\hat{\bmx}^1_0(t/L) $ can also be considered as the optimal reconstruction of $ \bmx_{0}$ given $\mathbf{y}^1_{t/L} =  \frac{\sqrt{\bar{\alpha}_{t/L}}}{\sqrt{1- \bar{\alpha}_{t/L}}} {\bmx}_0+ \boldsymbol{\epsilon}$. Using the above result on  (\ref{eq:triangle_toverl}) gives us the Theorem.
\end{proof}

\subsection{Proof of Theorem~\ref{theorem:ddpm_tf}} \label{appx:theorem:ddpm_tf}
\begin{theorem*} 
    The reconstruction errors introduced of the purification $ r_{t}  \circ f_{t} \circ \textsc{TF}$ is bounded by:
    \begin{align*}
    \textsc{MSE}_{r_{t}  \circ f_{t} \circ \textsc{TF} }(\boldsymbol{\epsilon}_a) &\leq \textup{MMSE}\left( \frac{{\bar{\alpha}_t}}{{1- \bar{\alpha}_t}}  \right) + \boldsymbol{\delta}_{\textsc{DDPM}}(t) \\
    &+ \textsc{E}_{\textsc{Tucker}} + \left \| \textsc{TF}(\boldsymbol{\epsilon}_a ) \right \| 
\end{align*}
\end{theorem*}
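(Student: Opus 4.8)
The plan is to treat the Tucker-denoised signal $\textsc{TF}(\bmx_0)$ as a fresh adversarial input to the diffusion-denoising stage $r_t \circ f_t$ and then invoke Theorem~\ref{theorem:ddpm_time_upper}. First I would exploit the linearity of $\textsc{TF}$ (both the tensorization $T$ and the Tucker projection are linear, as noted just before (\ref{eq:error_bound_tf})) to split the output of the first stage on the adversarial sample $\bmx_0 = \bmx_{clean} + \boldsymbol{\epsilon}_a$ as $\textsc{TF}(\bmx_0) = \textsc{TF}(\bmx_{clean}) + \textsc{TF}(\boldsymbol{\epsilon}_a)$. Writing $\textsc{TF}(\bmx_0) = \bmx_{clean} + \boldsymbol{\epsilon}_a'$ with the \emph{effective perturbation} $\boldsymbol{\epsilon}_a' := \left(\textsc{TF}(\bmx_{clean}) - \bmx_{clean}\right) + \textsc{TF}(\boldsymbol{\epsilon}_a)$, the triangle inequality gives $\|\boldsymbol{\epsilon}_a'\| \leq \textsc{E}_{\textsc{Tucker}} + \|\textsc{TF}(\boldsymbol{\epsilon}_a)\|$, recalling that by definition $\textsc{E}_{\textsc{Tucker}} = \|\bmx_{clean} - \textsc{TF}(\bmx_{clean})\|$. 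This is exactly the quantity the TF error bound (\ref{eq:error_bound_tf}) controls.

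Next I would apply Theorem~\ref{theorem:ddpm_time_upper} essentially verbatim, but with the role of the adversarial sample played by $\textsc{TF}(\bmx_0)$ and the role of its additive perturbation relative to $\bmx_{clean}$ played by $\boldsymbol{\epsilon}_a'$. Since the forward diffusion $f_t$ only ever sees $\textsc{TF}(\bmx_0)$ as its starting point, the Gaussian-channel argument underlying Theorem~\ref{theorem:ddpm_time_upper} is unchanged and yields
\begin{align*}
\mathbb{E}\left[\|\hat{\bmx}_0(t) - \bmx_{clean}\|\right] \leq \textup{MMSE}\left( \frac{\bar{\alpha}_t}{1 - \bar{\alpha}_t} \right) + \boldsymbol{\delta}_{\textsc{DDPM}}(t) + \|\boldsymbol{\epsilon}_a'\|,
\end{align*}
where $\hat{\bmx}_0(t)$ now denotes the reverse-process output of $r_t \circ f_t$ applied to $\textsc{TF}(\bmx_0)$, which is precisely the output of $r_t \circ f_t \circ \textsc{TF}$. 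Substituting $\|\boldsymbol{\epsilon}_a'\| \leq \textsc{E}_{\textsc{Tucker}} + \|\textsc{TF}(\boldsymbol{\epsilon}_a)\|$ then delivers (\ref{eq:error_bound_tfddpm}), so the proof is a clean composition of the TF error bound (\ref{eq:error_bound_tf}) with Theorem~\ref{theorem:ddpm_time_upper}.

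The only subtlety I expect to verify is that the recovery assumption (\ref{eq:assume_delta_ddpm}) defining $\boldsymbol{\delta}_{\textsc{DDPM}}(t)$ is imposed uniformly, so that it remains valid when the DDPM is fed $\textsc{TF}(\bmx_0)$ rather than a raw adversarial image. This should cause no difficulty, since (\ref{eq:assume_delta_ddpm}) is stated for all $t$ as a bound relative to the optimal Gaussian-channel estimator and does not reference how the input to $f_t$ was produced. Consequently, the genuine content of the argument is just the bookkeeping of which quantity plays the role of $\boldsymbol{\epsilon}_a$ at each stage: the perturbation entering the DDPM after Tucker filtering is no longer $\boldsymbol{\epsilon}_a$ but the effective $\boldsymbol{\epsilon}_a'$, whose norm is dominated by the two Tucker terms rather than by $\|\boldsymbol{\epsilon}_a\|$ — which is also what makes the bound favorable in the high-noise regime discussed after the theorem.
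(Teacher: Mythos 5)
Your proposal is correct and follows essentially the same route as the paper's proof: the paper likewise defines the effective post-Tucker perturbation $\boldsymbol{\epsilon}^{\textsc{TF}}_a := \textsc{TF}(\bmx + \boldsymbol{\epsilon}_a) - \bmx$ (identical to your $\boldsymbol{\epsilon}_a'$), views $r_t \circ f_t \circ \textsc{TF}$ as the purification $r_t \circ f_t$ applied to $\bmx + \boldsymbol{\epsilon}^{\textsc{TF}}_a$, invokes Theorem~\ref{theorem:ddpm_time_upper}, and bounds $\|\boldsymbol{\epsilon}^{\textsc{TF}}_a\|$ by $\textsc{E}_{\textsc{Tucker}} + \|\textsc{TF}(\boldsymbol{\epsilon}_a)\|$ via the Tucker error bound (\ref{eq:error_bound_tf}). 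Your explicit check that the recovery assumption (\ref{eq:assume_delta_ddpm}) is input-agnostic is a point the paper leaves implicit, but it changes nothing substantive.
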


\begin{proof}
We denote the purified signal of the adaptation of tensor-factorization into the purification process of DDPM by:
\begin{align}
    \hat{\bmx}_{\textsc{DDPM-TF}}:= r_{t}  \circ f_{t} \circ \textsc{TF} \ (\bmx + \boldsymbol{\epsilon}_{a}) \label{eq:pur_tfddpm}
\end{align}
By denoting $\boldsymbol{\epsilon}^{\textsc{TF}}_a :=  \textsc{TF}(\bmx + \boldsymbol{\epsilon}_a) - \bmx $, we can consider (\ref{eq:pur_tfddpm}) as the applying of the purification $r_{t}  \circ f_{t}$ onto $\bmx + \boldsymbol{\epsilon}^{\textsc{TF}}_a$. By applying Theorem~\ref{theorem:ddpm_time_upper}, we have:
\begin{small}
\begin{align}
    \textsc{MSE}_{r_{t}  \circ f_{t} \circ \textsc{TF} }(\boldsymbol{\epsilon}_a) &\leq \textup{MMSE}\left( \frac{{\bar{\alpha}_t}}{{1- \bar{\alpha}_t}}  \right) + \boldsymbol{\delta}_{\textsc{DDPM}}(t) + \| \boldsymbol{\epsilon}^{\textsc{TF}}_a \| \label{eq:error_bound_tfddpm_1}
\end{align}
\end{small}Since $\boldsymbol{\epsilon}^{\textsc{TF}}_a$ is the purification error of tensor-factoriazation on $\bmx + \boldsymbol{\epsilon}_a$, (\ref{eq:error_bound_tf}) implies:
\begin{align}
    \| \boldsymbol{\epsilon}^{\textsc{TF}}_a \| = \textsc{MSE}_{\textsc{TF}}(\boldsymbol{\epsilon}_a) \leq  \textsc{E}_{\textsc{Tucker}} + \left \| \textsc{TF}(\boldsymbol{\epsilon}_a ) \right \|
\end{align}
which gives us
\begin{align}
    \textsc{MSE}_{r_{t}  \circ f_{t} \circ \textsc{TF} }(\boldsymbol{\epsilon}_a) \leq& \textup{MMSE}\left( \frac{{\bar{\alpha}_t}}{{1- \bar{\alpha}_t}}  \right) + \boldsymbol{\delta}_{\textsc{DDPM}}(t) \nonumber
    \\
    +& \textsc{E}_{\textsc{Tucker}} + \left \| \textsc{TF}(\boldsymbol{\epsilon}_a ) \right \| 
\end{align}
\end{proof}
\section{Experimental Details}\label{append:Implmentation}
In this appendix, we provide the details of the experimental results reported in our main manuscript.

\subsection{Experimental details of Fig.~\ref{fig:msse_vs_max_iter}} \label{appx:mmse}

Intuitively, Theorem~\ref{theorem:ddpm_time_loop} captures the impact of increasing the iterative factor $L$ on the reconstruction error induced by the purification process. Experiment in Fig.~\ref{fig:msse_vs_max_iter} aims to illustrate that behavior. As the MMSE (\ref{eq:mmse}) depends on the actual input distribution, to visualize the lower bound of Theorem~\ref{theorem:ddpm_time_loop}, we consider the input of the Gaussian channel $\mathbf{y}_t  = \frac{\sqrt{\bar{\alpha}_t}}{\sqrt{1- \bar{\alpha}_t}} \bmx_0 + \boldsymbol{\epsilon}_0$ induced by (\ref{eq:diff_xt}) to be a Gaussian signal. Thus, the $\textup{MMSE}(\textsc{SNR})$ is simply $1/(1+SNR)$~\cite{mmse_mi}. Given that, we can plot the dominent term $  L \times \left(  \textup{MMSE}\left( \frac{{\bar{\alpha}_{t/L}}}{{1- \bar{\alpha}_{t/L}}}  \right) \right) $ of the bound in Theorem~\ref{theorem:ddpm_time_loop} and plot it in Fig.~\ref{fig:msse_vs_max_iter}.

\subsection{Tucker Tensor Decomposition} \label{appx:tucker}
Formally, given a tensor \(\mathcal{X}\) of size $I_1 \times I_2 \times ... \times I_N$, we project it into a lower-dimensional space using Tucker factor matrices: $\mathcal{G} = \mathcal{X} \times_{2} \bm U^{T}_2 \times_{3} \bm U^T_{3} \times_{4} \bm U^{T}_{4} \times_{5} \bm U^{T}_{5}$, where $\mathcal{G}$ is the core tensor of size $r_1 \times r_2 \times ... \times r_5$ for $r_n \leq I_n$ $(n \in \{1,2, ..,5\})$, $\bm U_n$ is the factor matrix for mode $n$ with size $I_n \times r_n$, and $\times_n$ denotes the mode-$n$ product. The reconstruction signal is $\hat{\mathcal{X}} = \mathcal{G} \times_{5} \bm U_{5} \times_{4} \bm U_{4} \times_{3} \bm U_{3} \times_{2} \bm U_{2}$. For convenient, we express the above process of projecting-recovering as $ \mathcal{X} \approx  \hat{\mathcal{X}} = d^{-1}  \circ d (\mathcal{X})$, where $d$ and $d^{-1}$ denote the projection and the reconstruction, respectively.

\subsection{Attacking methods} \label{appx:attack_detail}
We use AutoAttack at perturbation levels of $(\frac{8}{255}, \frac{16}{255}, \frac{32}{255})$ and when applicable, compare our results with the reported accuracies in the RobustBench benchmark. For evaluation, we employ the RobustBench codebase and model zoo\footnote{\url{ https://github.com/RobustBench/robustbench}} to obtain hyperparameters whenever the standard model is available. For AutoAttack under the $L_\infty$ norm, and due to the complexity of ImageNet, we use a perturbation level of $\frac{4}{255}$ for the implementation of the EOT-PGD attack~\cite{kim2020torchattacks}. For black-box attacks, we use the standard version of AutoAttack, which includes APGD-CE, APGD-T, FAB-T, and Square, as well as PGD+EOT on CelebA-HQ. For white-box attacks, we use the AutoAttack RAND version, which comprises APGD-CE and APGD-DLR. For high iterations, we utilize the repository from~\cite{kang2024diffattack} to explicitly compute the attack gradients rather than relying on the computational graph.

\subsection{Implicit trick for surrogate gradients}\label{appx:attacking_surrogate}

To obtain the exact gradients resulting from the DDPM with $t$ effective time-steps, the attacker needs to store and backward a computational graph whose size is proportional to $N \times t \times \| \Phi \|$, where $N$ is the batch size and $\| \Phi \| $ is the size of the DDPM's denoiser. As $\Phi$ is typically of millions parameters, exact gradients' computation creates an extreme burden on computational resource. To alleviate this challenge, \citet{lee2023robust} proposes to compute the gradient in a \textit{skipping} manner. In particular, instead of iteratively compute the reconstructed signal, they use a proxy process of skipping $k$ time-step per iteration and computes:
\begin{align}
     &\bmx_{t-k}
    \approx  \frac{{\sqrt{\bar{\alpha}_{t-k}} } }{\sqrt{\bar{\alpha}_{t}}} \bmx_{t } \nonumber \\
    &+ \sqrt{\bar{\alpha}_{t-k}} \times \left( \sqrt{\frac{1 - \bar{\alpha}_{t-k}}{\bar{\alpha}_{t-k}}} 
- \sqrt{\frac{1 - \bar{\alpha}_{t}}{\bar{\alpha}_{t}}} 
\right)  \boldsymbol{\epsilon}_{\theta}\left( \bmx_t, t
    \right)
\end{align}
Then, the surrogate gradient, which is the gradients computing based on the reconstructed signal resulted from the skipping computation above, is used instead of the exact gradients. As this skipping trick reduces the computational graph by a factor of $k$, it allows attacks with reasonable computational complexity. 

It it noteworthy to point out that, this surrogate method results in attackers that is heuristically significantly stronger than the adjoint~\cite{nie2022DiffPure}, as reported in~\cite{lee2023robust}.

\subsection{Algorithm}
This appendix provides the pseudocode for LoRID, which is shown in Algo.~\ref{alg:LoRID}.
\label{appx:algorithm}
\begin{algorithm}[!ht] 
    \SetAlgoLined
    \SetKwInOut{Input}{Input}
    \SetKwInOut{Output}{Output}
    \SetKwInOut{Given}{Given}
    \Input{Input image $\bmx$, looping parameter $L$, and total purified time-step $t$}
    \Given{DDPM's forwarding and reversing functions $\{f_t,r_t\}_{t=1}^T$, and Tucker decomposition $\textsc{TF}$ }
    \Output{Purified image $\hat{\bmx}_0$}
    \textbf{Step 1: }\textit{Tucker decomposition}\\
    \Indp $\bmx \leftarrow \textsc{TF}(\bmx)$\\
    \Indm\textbf{Step 2: }\textit{Iterative Diffusion}\\
    \Indp $t' = \lfloor t/L\rfloor$ \\
    For {$l = 1$ to $L$} do: \\
    \Indp  $\bmx \leftarrow f_{t'}(r_{t'}(\bmx))$\\
    \Indm Return $\hat{\bmx}_0 \leftarrow \bmx$
     \caption{Low-Rank Iterative Diffusion}\label{alg:LoRID}
    \end{algorithm}
\subsection{Selection of LoRiD parameters} \label{appx:calibration}
In the evaluation of our model, we meticulously selected the parameters \( t \) and iterations ($L$) based on a detailed analysis of clean accuracy as a function of the LoRID parameters. The contour map presented in Figure~\ref{fig:heatmap_T_iter} illustrates the nuanced relationship between these parameters and clean accuracy, where variations in \( t \) and $L$ can lead to significant shifts in performance. Specifically, the map reveals that while selecting parameter regions that correspond to higher clean accuracy might intuitively seem advantageous, it paradoxically leads to a degradation in robust accuracy. This is attributed to the reduction in model complexity when parameters are set to optimize for clean accuracy, thereby compromising the model's robustness under adversarial conditions. Conversely, increasing \( t \) and iter enhances the complexity of the diffusion process, which bolsters robustness but at the expense of clean accuracy. Given this trade-off, our parameter selection strategy focused on identifying a balance that optimizes both clean and robust accuracies, ensuring that the model remains resilient without sacrificing performance on clean data.

    \begin{figure}
        \centering
        \includegraphics[width=0.8\linewidth]{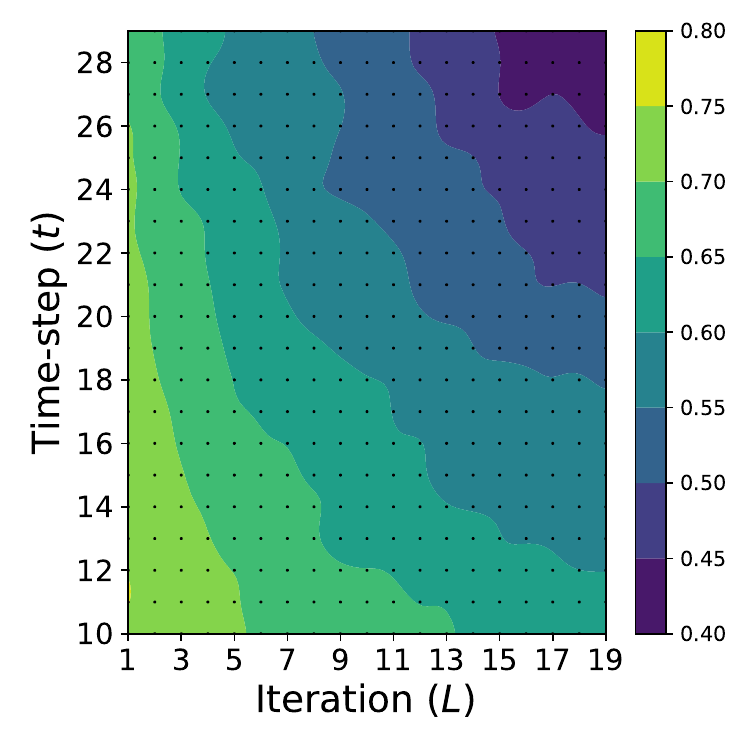}
        \caption{Impact of the time-step $t$ and iterative factor $L$ on the standard accuracy of WideResnet-28-10 in CIFAR-100 dataset.}
        \label{fig:heatmap_T_iter}
    \end{figure}
\subsection{Hardware Setup}
For this paper, our experiments utilized a HPC cluster where each node integrates four NVIDIA Hopper (H100) GPUs, each paired with a corresponding NVIDIA Grace CPU via NVLink-C2C, facilitating rapid data transfer crucial for intensive computational tasks. The GPUs are equipped with 96GB of HBM2 memory, optimal for handling large models and datasets. This setup is supported by an HPE/Cray Slingshot 11 interconnect with a bandwidth of 200GB/s, ensuring efficient inter-node communication essential for scalable machine learning operations.

\subsection{Code availability}
The code used in this study is currently under review for release by the organization. We are awaiting approval, and once granted, the code will be made publicly available.

\end{document}